\def\eqref#1{equation~\ref{#1}}
\def\1{\bm{1}}
\DeclareMathAlphabet{\mathsfit}{\encodingdefault}{\sfdefault}{m}{sl}
\SetMathAlphabet{\mathsfit}{bold}{\encodingdefault}{\sfdefault}{bx}{n}
\newcommand{\cG}{\mathcal{G}}
\newcommand{\cB}{\mathcal{B}}
\newcommand{\cV}{\mathcal{V}}
\newcommand{\cS}{\mathcal{S}}
\newcommand{\Wl}{W^{(\ell)}}
\newcommand{\Jl}{J^{(\ell)}}
\newcommand{\Jlp}{J^{(\ell-1)}}
\newcommand{\JL}{J^{(L)}}
\newcommand{\Ml}{M^{(\ell)}}
\newcommand{\Hl}{H^{(\ell)}}
\newcommand{\Hlp}{H^{(\ell-1)}}
\newcommand{\Zl}{Z^{(\ell)}}
\newcommand{\fl}{f^{(\ell)}}
\newcommand{\Gl}{G^{(\ell)}}
\newcommand{\Wtl}{\widetilde{W}^{(t,\ell)}}
\newcommand{\Wtlpc}{\widetilde{W}^{(t-1,\ell)}}
\newcommand{\Wtlcp}{\widetilde{W}^{(t,\ell-1)}}
\newcommand{\Wtlcn}{\widetilde{W}^{(t,\ell+1)}}
\newcommand{\WtL}{\widetilde{W}^{(t,L)}}
\newcommand{\Jtl}{\widetilde{J}^{(t,\ell)}}
\newcommand{\Jtlpc}{\widetilde{J}^{(t-1,\ell)}}
\newcommand{\Jtlpp}{\widetilde{J}^{(t-1,\ell-1)}}
\newcommand{\Jtlcp}{\widetilde{J}^{(t,\ell-1)}}
\newcommand{\JtL}{\widetilde{J}^{(t,L)}}
\newcommand{\JtLpc}{\widetilde{J}^{(t-1,L)}}
\newcommand{\Mtl}{\widetilde{M}^{(t,\ell)}}
\newcommand{\Mtlpc}{\widetilde{M}^{(t-1,\ell)}}
\newcommand{\Htl}{\widetilde{H}^{(t,\ell)}}
\newcommand{\Htlcp}{\widetilde{H}^{(t,\ell-1)}}
\newcommand{\Htlpp}{\widetilde{H}^{(t-1,\ell-1)}}
\newcommand{\Htlpc}{\widetilde{H}^{(t-1,\ell)}}
\newcommand{\HtL}{\widetilde{H}^{(t,L)}}
\newcommand{\HtLpc}{\widetilde{H}^{(t-1,L)}}
\newcommand{\HtLcp}{\widetilde{H}^{(t,L-1)}}
\newcommand{\Ztl}{\widetilde{Z}^{(t,\ell)}}
\newcommand{\Ztlpc}{\widetilde{Z}^{(t-1,\ell)}}
\newcommand{\ftl}{\widetilde{f}^{(t,\ell)}}
\newcommand{\ftlcp}{\widetilde{f}^{(t,\ell-1)}}
\newcommand{\ftlcn}{\widetilde{f}^{(t,\ell+1)}}
\newcommand{\ftL}{\widetilde{f}^{(t,L)}}
\newcommand{\Gtl}{\widetilde{G}^{(t,\ell)}}
\newcommand{\thetat}{\widetilde{\theta}^{(t)}}
\newcommand{\dH}{\nabla_H}
\newcommand{\dW}{\nabla_W}
\newcommand{\Loss}{\mathcal{L}}
\newcommand{\Pin}{P_{in}}
\newcommand{\Pbd}{P_{bd}}
\newcommand{\dLoss}{\nabla\Loss}
\newcommand{\dLosst}{\nabla\widetilde{\Loss}}
\newcommand{\niparagraph}[1]{\textbf{#1}}
\newtheorem{theorem}{Theorem}[section]
\newtheorem{assumption}{Assumption}[section]
\newtheorem{lemma}[theorem]{Lemma}
\newtheorem{corollary}[theorem]{Corollary}
\newcommand{\PreserveBackslash}[1]{\let\temp=\\#1\let\\=\temp}
\newcolumntype{C}[1]{>{\PreserveBackslash\centering}p{#1}}
\newcolumntype{R}[1]{>{\PreserveBackslash\raggedleft}p{#1}}
\newcolumntype{L}[1]{>{\PreserveBackslash\raggedright}p{#1}}
\setlist[itemize]{leftmargin=20pt,topsep=0pt}
\definecolor{bluelink}{RGB}{0,0,238}
\definecolor{mydarkblue}{rgb}{0,0.08,0.45}
\title{PipeGCN: Efficient Full-Graph Training of Graph Convolutional Networks with Pipelined Feature Communication}
\author{Cheng Wan \\
Rice University \\
\texttt{chwan@rice.edu} \\
\And
Youjie Li \\
UIUC \\  %University of Illinois at Urbana-Champaign \\
\texttt{li238@illinois.edu} \\
\And
Cameron R. Wolfe \\
Rice University \\
\texttt{crw13@rice.edu} \\
\AND
Anastasios Kyrillidis \\
Rice University \\
\texttt{anastasios@rice.edu} \\
\And
Nam Sung Kim \\
UIUC \\ % University of Illinois at Urbana-Champaign \\
\texttt{nam.sung.kim@gmail.com} \\
\And
Yingyan Lin \\
Rice University \\
\texttt{yl150@rice.edu}\\
}
\begin{document}

\maketitle

\begin{abstract}

Graph Convolutional Networks (GCNs) is the state-of-the-art method for learning graph-structured data, and training large-scale GCNs requires distributed training across multiple accelerators such that each accelerator is able to hold a partitioned subgraph.
However, distributed GCN training incurs prohibitive overhead of communicating node features and feature gradients among partitions for every GCN layer during each training iteration, limiting the achievable training efficiency and model scalability.
To this end, we propose PipeGCN, a simple yet effective scheme that hides the communication overhead by pipelining inter-partition communication with intra-partition computation. 
It is non-trivial to pipeline for efficient GCN training, as communicated node features/gradients will become stale and thus can harm the convergence, negating the pipeline benefit.
Notably, little is known regarding the convergence rate of GCN training with both stale features and stale feature gradients.
This work not only provides a theoretical convergence analysis but also finds the convergence rate of PipeGCN to be close to that of the vanilla distributed GCN training without any staleness.
Furthermore, we develop a smoothing method to further improve PipeGCN's convergence. 
Extensive experiments show that PipeGCN can largely boost the training throughput (1.7$\times$$\sim$28.5$\times$) while achieving the same accuracy as its vanilla counterpart and existing full-graph training methods.
The code is available at \href{https://github.com/RICE-EIC/PipeGCN}{https://github.com/RICE-EIC/PipeGCN}.

\end{abstract}

\section{Introduction}

Graph Convolutional Networks (GCNs) \citep{kipf2016semi} have gained great popularity recently as they demonstrated the state-of-the-art (SOTA) performance in learning graph-structured data \citep{zhang2018link,xu2018powerful,ying2018graph}.
Their promising performance is resulting from their ability to capture diverse neighborhood connectivity.
In particular, a GCN aggregates all features from the neighbor node set for a given node, the feature of which is then updated via a multi-layer perceptron. 
Such a two-step process (\textit{neighbor aggregation} and \textit{node update}) empowers GCNs to better learn graph structures.
Despite their promising performance, training GCNs at scale is still a challenging problem, as a prohibitive amount of compute and memory resources are required to train a real-world large-scale graph, let alone exploring deeper and more advanced models.
To overcome this challenge, various sampling-based methods have been proposed to reduce the resource requirement at a cost of incurring feature approximation errors.
A straightforward instance is to create mini-batches by sampling neighbors (e.g., GraphSAGE \citep{hamilton2017inductive} and VR-GCN \citep{chen2018stochastic}) or to extract subgraphs as training samples (e.g., Cluster-GCN \citep{chiang2019cluster} and GraphSAINT \citep{zeng2019graphsaint}). 

In addition to sampling-based methods, distributed GCN training has emerged as a promising alternative, as it enables large \textit{full-graph} training of GCNs across multiple accelerators such as GPUs.
This approach first partitions a giant graph into multiple small subgraps, each of which is able to fit into a single GPU, and then train these partitioned subgraphs locally on GPUs together with indispensable communication across partitions.
Following this direction, several recent works~\citep{ma2019neugraph,jia2020improving,tripathy2020reducing,thorpe2021dorylus,wan2022bns} have been proposed and verified the great potential of distributed GCN training. 
$P^3$~\citep{gandhi2021p3} follows another direction that splits the data along the feature dimension and leverages intra-layer model parallelism for training, which shows superior performance on small models. 

In this work, we propose a new method for distributed GCN training, PipeGCN, which targets achieving a full-graph accuracy with boosted training efficiency. Our main contributions are following:

\begin{itemize}[leftmargin=*,topsep=0mm]

    \item We first analyze two efficiency bottlenecks in distributed GCN training: the required \textit{significant communication overhead} and \textit{frequent synchronization}, and then propose a simple yet effective technique called PipeGCN to address both aforementioned bottlenecks by pipelining inter-partition communication with intra-partition computation to hide the communication overhead.

	\item We address the challenge raised by PipeGCN, i.e., the resulting staleness in communicated features and feature gradients (\textit{neither weights nor weight gradients}), by providing a theoretical convergence analysis and showing that PipeGCN's convergence rate is $\mathcal{O}(T^{-\frac{2}{3}})$, i.e., close to vanilla distributed GCN training without staleness.
	\textit{To the best of our knowledge, we are the first to provide a theoretical  convergence proof of GCN training with both stale feature and stale feature gradients.}

	\item We further propose a low-overhead smoothing method to further improve PipeGCN's convergence by reducing the error incurred by the staleness.
	
	\item Extensive empirical and ablation studies consistently validate the advantages of PipeGCN over both vanilla distributed GCN training and those SOTA full-graph training methods (e.g., \textit{boosting the training throughput by 1.7$\times$$\sim$28.5$\times$ while achieving the same or a better accuracy}). 
\end{itemize}

\section{Background and Related Works}
\textbf{Graph Convolutional Networks.}
GCNs represent each node in a graph as a feature (embedding) vector and learn the feature vector via a two-step process (\textit{neighbor aggregation} and then \textit{node update}) for each layer, which can be mathematically described as:

\vspace{-1.8em}{
\begin{align}
z^{(\ell)}_v&=\zeta^{(\ell)}\left(\left\{h^{(\ell-1)}_u\mid u\in\mathcal{N}(v)\right\}\right)\label{eq:aggr}\\
h^{(\ell)}_v&=\phi^{(\ell)}\left(z^{(\ell)}_v,h^{(\ell-1)}_v\right)\label{eq:update}
\end{align}
\vspace{-1.5em}
}

where $\mathcal{N}(v)$ is the neighbor set of node $v$ in the graph, $h^{(\ell)}_v$ represents the learned embedding vector of node $v$ at the $\ell$-th layer, $z_v^{(\ell)}$ is an intermediate aggregated feature calculated by an aggregation function $\zeta^{(\ell)}$, and $\phi^{(\ell)}$ is the function for updating the feature of node $v$.
The original GCN \citep{kipf2016semi} uses a weighted average aggregator for $\zeta^{(\ell)}$ and the update function $\phi^{(\ell)}$ is a single-layer perceptron $\sigma(W^{(\ell)}z_v^{(\ell)})$ where $\sigma(\cdot)$ is a non-linear activation function and $W^{(\ell)}$ is a weight matrix. Another famous GCN instance is GraphSAGE \citep{hamilton2017inductive} in which $\phi^{(\ell)}$ is $ \sigma\left(W^{(\ell)}\cdot\textsc{CONCAT}\left(z^{(\ell)}_v,h^{(\ell-1)}_v\right)\right)$.

\begin{comment}
\textbf{Large Graph Training.}
Real-world graphs consist of millions of nodes and 
edges \citep{hu2020open}, far beyond the capability of vanilla GCNs~\citep{hamilton2017inductive, jia2020improving}.
As such, sampling-based methods were proposed, e.g., neighbor sampling \citep{hamilton2017inductive,chen2018stochastic}, layer sampling \citep{chen2018fastgcn,huang2018adaptive,zou2019layer}, and subgraph sampling \citep{chiang2019cluster,zeng2019graphsaint}, which yet suffer from: 
\begin{itemize}
    \item \textit{Inaccurate feature estimation}: although most sampling methods provide unbiased estimation of node features, the variance of these estimation hurts the model accuracy. As \cite{cong2020minimal} shows, sampling-based methods with a smaller variance is beneficial to improving the accuracy.
    
    \item \textit{Neighbor explosion}: \cite{hamilton2017inductive} first uses node sampling to randomly select several neighbors in the previous layer, but as GCNs get deeper the size of selected nodes exponentially increases. \cite{chen2018stochastic} and \cite{huang2018adaptive} further propose samplers for restricting the size of neighbor expansion, which yet suffers from heavy memory requirements.
        
    \item \textit{Sampling overhead}: All sampling-based methods incur extra time for generating mini-batches, which can occupy 25\%+ of the training time \citep{zeng2019graphsaint}.
\end{itemize}
\end{comment}

\textbf{Distributed Training for GCNs.} 
A real-world graph can contain millions of nodes and billions of edges \citep{hu2020open}, for which a feasible training approach is to partition it into small subgraphs (to fit into each
GPU's resource), and train them in parallel, during which necessary communication is performed to exchange boundary node features and gradients to satisfy GCNs's \textit{neighbor aggregation} (Equ.~\ref{eq:aggr}).
Such an approach is called \textit{vanilla partition-parallel training}
and is illustrated in Fig.~\ref{fig:ConceptCompare} (a).
Following this approach, several works have been proposed recently. 
NeuGraph~\citep{ma2019neugraph}, AliGraph~\citep{zhu2019aligraph}, and ROC~\citep{jia2020improving} perform such partition-parallel training but rely on CPUs for storage for all partitions and repeated swapping of a partial partition to GPUs. 
Inevitably, prohibitive CPU-GPU swaps are incurred, plaguing the achievable training efficiency.
CAGNET~\citep{tripathy2020reducing} is different in that it splits each node feature vector into tiny sub-vectors which are then broadcasted and computed sequentially, thus requiring redundant communication and frequent synchronization.
Furthermore, $P^3$ \citep{gandhi2021p3} proposes to split both the feature and the GCN layer for mitigating the communication overhead, but it makes a strong assumption that the hidden dimensions of a GCN should be considerably smaller than that of input features, which restricts the model size.
A concurrent work Dorylus~\citep{thorpe2021dorylus} adopts a fine-grained pipeline along each compute operation in GCN training and supports asynchronous usage of stale features.
Nevertheless, the resulting staleness of \textit{feature gradients} is neither analyzed nor considered for convergence proof, let alone error reduction methods for the incurred staleness.

\begin{figure*}[t]
  \centering
  \includegraphics[width=1\linewidth]{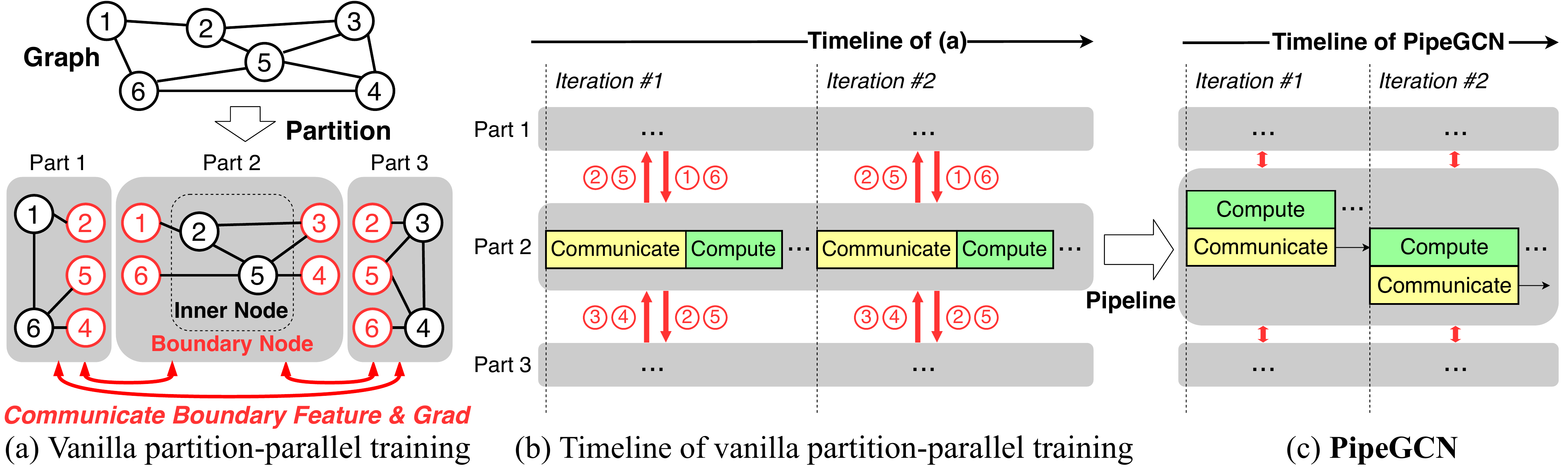}
  \vspace{-1.8em}
  \caption{An illustrative comparison between vanilla partition-parallel training and PipeGCN.}
  \label{fig:ConceptCompare}
  \vspace{-0.2em}
\end{figure*}

\begin{comment}
\begin{wrapfigure}[]{R}{0.5\textwidth}
\vspace{-1.5em}
\footnotesize
\centering
\setlength{\tabcolsep}{0.1em}
\captionof{table}{Comparison with Pipe-SGD.}
\label{tab:vs_pipesgd}
\vspace{-1em}
\begin{tabular}{ccc}
\specialrule{.1em}{.05em}{.05em}
Method &
  Pipe-SGD &
  \textbf{PipeGCN } \\
\specialrule{.1em}{.05em}{.05em}
Target &
  \begin{tabular}[c]{@{}c@{}}Large Model,\\ Small Feature\end{tabular} &
  Large Feature \\ \hline
Staleness &
  Weight Gradients &
  \begin{tabular}[c]{@{}c@{}}Features and\\ Feature Gradients\end{tabular} \\ \hline
\begin{tabular}[c]{@{}c@{}}Reduce \\ Overhead\end{tabular} &
  \begin{tabular}[c]{@{}c@{}}AllReduce of \\ Weight Gradient\end{tabular} &
  \begin{tabular}[c]{@{}c@{}}Aggregation of \\ Feature/Feature Grad.\end{tabular} \\ \hline
\begin{tabular}[c]{@{}c@{}}Convergence \\ Rate\end{tabular} &
  $\mathcal{O}(T^{-\frac{1}{2}})$ &
  $\mathcal{O}(T^{-\frac{2}{3}})$ \\ 
 \specialrule{.1em}{.05em}{.05em}
\end{tabular}
\vspace{-1em}
\end{wrapfigure}

\end{comment}
\begin{wrapfigure}[]{R}{0.526\textwidth}
\vspace{-1.2em}
\centering
\setlength{\tabcolsep}{0.2em}
\captionof{table}{Differences between conventional asynchronous distributed training and PipeGCN.}
\label{tab:vs_async}
\vspace{-.5em}
\begin{tabular}{ccc}
\specialrule{.1em}{.05em}{.05em}
Method    &  \begin{tabular}[c]{@{}c@{}c@{}} Hogwild!, SSP, \\ MXNet, Pipe-SGD, \\ PipeDream, PipeMare \end{tabular}  & \textbf{PipeGCN} \\
\specialrule{.1em}{.05em}{.05em}
Target    &  \begin{tabular}[c]{@{}c@{}}Large Model,\\ Small Feature\end{tabular} & Large Feature \\ \hline
Staleness &  Weight Gradients                                                     & \begin{tabular}[c]{@{}c@{}}Features and\\ Feature Gradients\end{tabular} \\
 \specialrule{.1em}{.05em}{.05em}
\end{tabular}
\vspace{-1em}
\end{wrapfigure}
\textbf{Asynchronous Distributed Training.}
Many prior works have been proposed for asynchronous distributed training of DNNs.
Most works (e.g., Hogwild! \citep{niu2011hogwild}, SSP \citep{ho2013more}, and MXNet \citep{li2014communication}) rely on a parameter server with multiple workers running asynchronously to hide communication overhead of \textit{weights/(weight gradients)} among each other, at a cost of using stale \textit{weight gradients} from previous iterations.
Other works like Pipe-SGD \citep{li2018pipe} pipeline such communication with local computation of each worker.
Another direction is to partition a large model along its layers across multiple GPUs and then stream in small data batches through the layer pipeline, e.g., PipeDream \citep{harlap2018pipedream} and PipeMare \citep{yang2021pipemare}.
Nonetheless, all these works aim at large models with small data, where communication overhead of model \textit{weights/weight gradients} are substantial but data feature communications are marginal (if not none), thus not well suited for GCNs.
More importantly, they focus on convergence with stale \textit{weight gradients} of models, rather than stale \textit{features/feature gradients} incurred in GCN training.
Tab.~\ref{tab:vs_async} summarizes the differences.
In a nutshell, \textit{little effort has been made to study asynchronous or pipelined distributed training of GCNs, where feature communication plays the major role, let alone the corresponding theoretical convergence proofs.}

\textbf{GCNs with Stale Features/Feature Gradients.} 
Several recent works have been proposed to adopt either stale features~\citep{chen2018stochastic,cong2020minimal} or feature gradients~\citep{cong2021importance} in single-GPU training of GCNs.
Nevertheless, their convergence analysis considers only one of two kinds of staleness and derives a convergence rate of $\mathcal{O}(T^{-\frac{1}{2}})$ for pure sampling-based methods. 
This is, however, limited in distributed GCN training as its \textit{convergence is simultaneously affected by both kinds of staleness}.
PipeGCN proves such convergence with both stale features and feature gradients and offers a better rate of $\mathcal{O}(T^{-\frac{2}{3}})$. 
Furthermore, none of previous works has studied the errors incurred by staleness which harms the convergence speed, while PipeGCN develops a low-overhead smoothing method to reduce such errors.

\section{The Proposed PipeGCN Framework}
\label{sec:method}
\niparagraph{Overview.}
To enable efficient distributed GCN training,
we first identify the two bottlenecks associated with vanilla partition-parallel training: \textit{substantial communication overhead} and \textit{frequently synchronized communication} (see Fig.~\ref{fig:ConceptCompare}(b)), and then address them directly by proposing a novel strategy, PipeGCN, which pipelines the communication and computation stages across two adjacent iterations in each partition of distributed GCN training for breaking the synchrony and then hiding the communication latency (see Fig.~\ref{fig:ConceptCompare}(c)).
It is non-trivial to achieve efficient GCN training with such a pipeline method, as staleness is incurred in communicated features/feature gradients and more importantly little effort has been made to study the convergence guarantee of GCN training using stale feature gradients.
This work takes an initial effort to prove both the theoretical and empirical convergence of such a pipelined GCN training method, and for the first time shows its convergence rate to be close to that of vanilla GCN training without staleness.
Furthermore, we propose a low-overhead smoothing method to reduce the errors due to stale features/feature gradients for further improving the convergence.

\subsection{Bottlenecks in Vanilla Partition-Parallel Training}
\label{sec:bottlenecks}
\begin{wrapfigure}[12]{R}{0.43\textwidth}
\vspace{-1.5em}
\footnotesize
\centering
\setlength{\tabcolsep}{0.43em}
\captionof{table}{The substantial communication overhead in vanilla partition-parallel training, where \textit{Comm. Ratio} is the communication time divided by the total training time. }
\label{tab:comm_ratio}
\vspace{-0.5em}
\begin{tabular}{ccc}
\specialrule{.1em}{.05em}{.05em}
\textbf{Dataset}                        & \textbf{\# Partition} & \textbf{Comm. Ratio} \\ 
\specialrule{.1em}{.05em}{.05em}
\multirow{2}{*}{Reddit}        & 2            & 65.83\%     \\
                               & 4            & 82.89\%     \\ \hline
\multirow{2}{*}{ogbn-products} & 5            & 76.17\%     \\
                               & 10           & 85.79\%     \\ \hline
\multirow{2}{*}{Yelp}          & 3            & 61.16\%     \\
                               & 6            & 76.84\%     \\ 
\specialrule{.1em}{.05em}{.05em}
\end{tabular}
\vspace{-5.0em}
\end{wrapfigure}
\niparagraph{Significant communication overhead.}
Fig.~\ref{fig:ConceptCompare}(a) illustrates \textit{vanilla partition-parallel training}, 
where each partition holds \textit{inner nodes}
that come from the original graph and \textit{boundary nodes}  
that come from other subgraphs.
These boundary nodes are demanded by the \textit{neighbor aggregation} of GCNs across neighbor partitions,
e.g., in Fig.~\ref{fig:ConceptCompare}(a) node-5 needs nodes-[3,4,6] from other partitions for calculating Equ.~\ref{eq:aggr}.
Therefore, \textit{it is the features/gradients of \textit{boundary nodes} that dominate the communication overhead in distributed GCN training}.
Note that the amount of boundary nodes can be excessive and far exceeds the inner nodes, as the boundary nodes are replicated across partitions and scale with the number of partitions.
Besides the sheer size, communication of boundary nodes occurs for (1) each layer and (2) both forward and backward passes, making communication overhead substantial.
We evaluate such overhead\footnote{The detailed setting can be found in Sec.~\ref{sec:setup}.} in Tab.~\ref{tab:comm_ratio} and find communication to be dominant, which is consistent with CAGNET~\citep{tripathy2020reducing}.

\niparagraph{Frequently synchronized communication.}
The aforementioned communication of boundary nodes must be finished before calculating Equ.~\ref{eq:aggr} and Equ.~\ref{eq:update}, which inevitably \textit{forces synchronization between communication and computation and requires a fully sequential execution} (see Fig.~\ref{fig:ConceptCompare}(b)).
Thus, for most of training time, each partition is waiting for dominant features/gradients communication to finish before the actual compute, repeated for each layer and for both forward and backward passes.

\begin{figure*}[!t]
\centering
\includegraphics[width=1.0\linewidth]{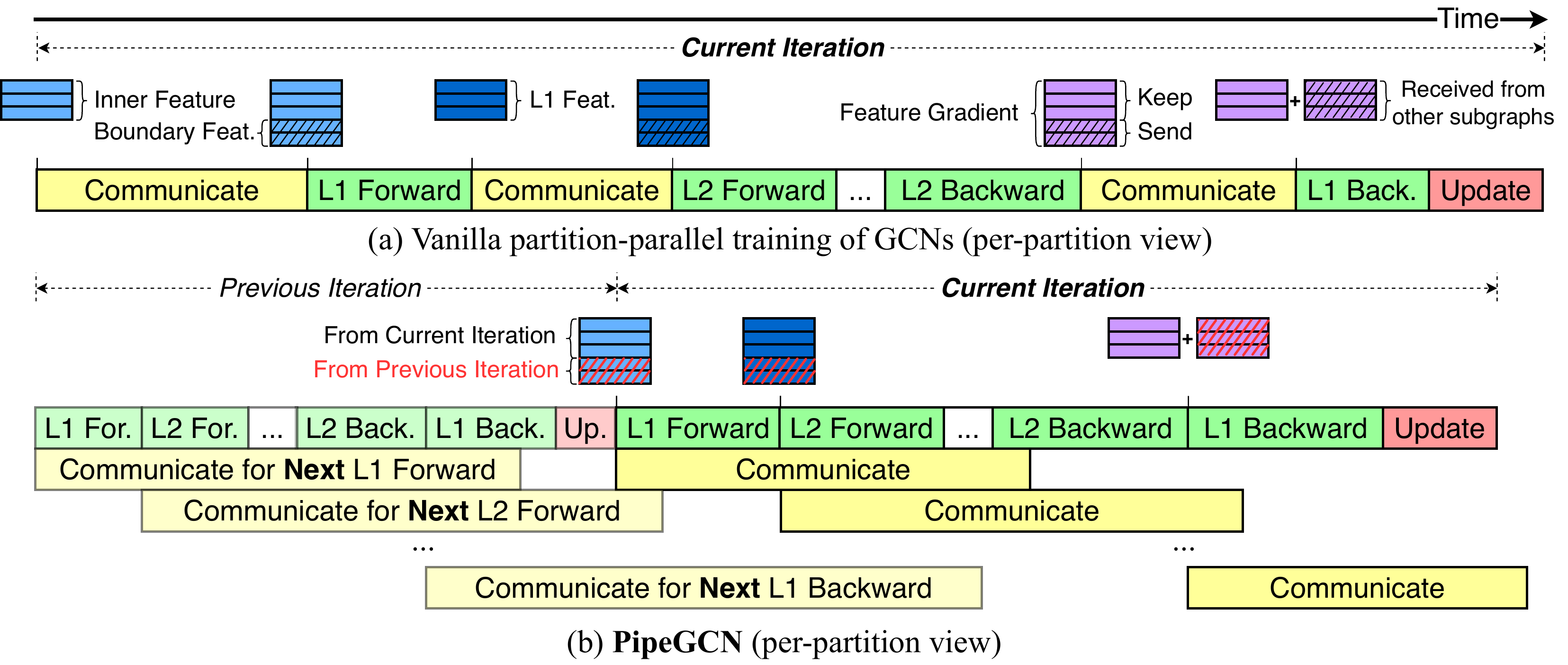}
\vspace{-1.8em}
\caption{A detailed comparison between vanilla partition-parallel training of GCNs and PipeGCN.} 
\vspace{-0.2em}
\label{fig:pipegcn}
\end{figure*}
\begin{figure}[!t]
\begin{minipage}{1\textwidth}
\centering
\begin{algorithm*}[H]
\SetAlgoLined
\DontPrintSemicolon
\caption{Training a GCN with PipeGCN (per-partition view).}
\label{alg:pipegcn}
\KwIn{partition id $i$, partition count $n$, graph partition $\cG_i$, propagation matrix $P_i$, node feature $X_i$, label $Y_i$, boundary node set $\cB_i$, layer count $L$, learning rate $\eta$, initial model $W_0$}
\KwOut{trained model $W_T$ after $T$ iterations}
\footnotesize
$\cV_i\leftarrow \{\text{node }v\in\cG_i:v\notin\cB_i\}$\Comment*[r]{create inner node set}
Broadcast $\cB_i$ and Receive $[\cB_1,\cdots,\cB_n]$\;
$[\cS_{i,1},\cdots,\cS_{i,n}]\leftarrow[\cB_1\cap\mathcal{V}_i,\cdots,\cB_n\cap\mathcal{V}_i$]\;
Broadcast $\cV_i$ and Receive $[\cV_1,\cdots,\cV_n]$\;
$[\cS_{1,i},\cdots,\cS_{n,i}]\leftarrow[\cB_i\cap\mathcal{V}_1,\cdots,\cB_i\cap\mathcal{V}_n$]\;

$H^{(0)}\leftarrow \left[\begin{array}{c}X_i\\ 0\end{array}\right]$\Comment*[r]{initialize node feature, set boundary feature as 0}

\For{$t\coloneqq 1\rightarrow T$}{
  \For(\Comment*[f]{forward pass}){$\ell\coloneqq 1\rightarrow L$}{
    \If{$t>1$}{
        wait until $thread_f^{(\ell)}$ completes\;
        $[\Hlp_{\cS_{1,i}},\cdots,\Hlp_{\cS_{n,i}}]\leftarrow[B_1^{(\ell)},\cdots,B_n^{(\ell)}]$\Comment*[r]{update boundary feature}
    }
    \With(\Comment*[f]{communicate boundary features in parallel}){$thread_f^{(\ell)}$} {
        Send $[\Hlp_{\cS_{i,1}},\cdots,\Hlp_{\cS_{i,n}}]$ to partition $[1,\cdots,n]$ and Receive $[B^{(\ell)}_1,\cdots,B^{(\ell)}_n]$\;
    }
    $\Hl_{\cV_i}\leftarrow\sigma(P_i\Hlp \Wl_{t-1})$\Comment*[r]{update inner nodes feature}
  }
  $\JL_{\mathcal{V}_i}\leftarrow\frac{\partial Loss(H^{(L)}_{\mathcal{V}_i}, Y_i)}{\partial H^{(L)}_{\mathcal{V}_i}}$\;
  \For(\Comment*[f]{backward pass}){$\ell\coloneqq L\rightarrow1$}{
    $\Gl_i\leftarrow\left[P_i\Hlp\right]^\top\left(\Jl_{\mathcal{V}_i}\circ\sigma'(P_i\Hlp\Wl_{t-1})\right)$\Comment*[r]{calculate weight gradient}
    \If{$\ell>1$}{
        $\Jlp\leftarrow P_i^\top\left(\Jl_{\mathcal{V}_i}\circ\sigma'(P_i\Hlp\Wl_{t-1})\right)[\Wl_{t-1}]^\top$\Comment*[r]{calculate feature gradient}
        \If{$t>1$}{
            wait until $thread_b^{(\ell)}$ completes\;
            \For{$j\coloneqq1\rightarrow n$}{
                $\Jlp_{\cS_{i,j}}\leftarrow\Jlp_{\cS_{i,j}}+C_j^{(\ell)}$\Comment*[r]{accumulate feature gradient}
            }
        }
        \With(\Comment*[f]{communicate boundary feature gradient in parallel}){$thread_b^{(\ell)}$}{
            Send $[\Jlp_{\cS_{1,i}},\cdots,\Jlp_{\cS_{n,i}}]$ to partition $[1,\cdots,n]$ and Receive $[C^{(\ell)}_1,\cdots,C^{(\ell)}_n]$\;
        }
    }
  }
  $G\leftarrow AllReduce (G_i)$\Comment*[r]{synchronize model gradient}
  $W_{t}\leftarrow W_{t-1}-\eta \cdot G$\Comment*[r]{update model}
}
\Return $W_T$
\end{algorithm*}
\end{minipage}
\end{figure}

\subsection{The Proposed PipeGCN Method}
Fig.~\ref{fig:ConceptCompare}(c) illustrates the high-level overview of PipeGCN, which pipelines the \textit{communicate} and \textit{compute} stages spanning two iterations for each GCN layer.
Fig.~\ref{fig:pipegcn} further provides the detailed end-to-end flow, where PipeGCN removes the heavy communication overhead in the vanilla approach by breaking the synchronization between communicate and compute and hiding communicate with compute of each GCN layer. 
This is achieved by \textit{deferring the communicate to next iteration's compute} (instead of serving the current iteration) such that compute and communicate can run in parallel. 
Inevitably, staleness is introduced in the deferred communication and results in \textit{a mixture usage of fresh inner features/gradients and staled boundary features/gradients}.

Analytically, PipeGCN is achieved by modifying Equ.~\ref{eq:aggr}.
For instance, when using a mean aggregator, Equ.~\ref{eq:aggr} and its corresponding backward formulation in PipeGCN become:

{\vspace{-1.2em}
\begin{align}
z^{(t,\ell)}_v&=\textsc{MEAN}\left(\{h^{(t,\ell-1)}_u\mid u\in\mathcal{N}(v)\setminus\mathcal{B}(v)\}\cup\{h^{(t-1,\ell-1)}_u\mid u\in\mathcal{B}(v)\}\right)\label{eq:pipegcn-forward} \\
\delta_{h_u}^{(t,\ell)}&=\sum_{v:u\in\mathcal{N}(v)\setminus\mathcal{B}(v)}\frac{1}{d_v}\cdot \delta_{z_v}^{(t,\ell+1)}+\sum_{v:u\in\mathcal{B}(v)}\frac{1}{d_v}\cdot\delta_{z_v}^{(t-1,\ell+1)}\label{eq:pipegcn-backward}
\end{align}
}\vspace{-0.8em}

where $\mathcal{B}(v)$ is node $v$'s boundary node set, $d_v$ denotes node $v$'s degree, and $\delta_{h_u}^{(t,\ell)}$ and $\delta_{z_v}^{(t,\ell)}$ represent the gradient approximation of $h_u$ and $z_v$ at layer $\ell$ and iteration $t$, respectively. 
Lastly, the implementation of PipeGCN are outlined in Alg.~\ref{alg:pipegcn}.

\subsection{PipeGCN's Convergence Guarantee}
\label{sec:convergence-analysis}

As PipeGCN adopts a mixture usage of fresh inner features/gradients and staled boundary features/gradients, its convergence rate is still unknown. We have proved the convergence of PipeGCN and present the convergence property in the following theorem.
\begin{theorem}[Convergence of PipeGCN, informal version]
\label{thm:main}
There exists a constant $E$ such that for any arbitrarily small constant $\varepsilon>0$, we can choose a learning rate $\eta=\frac{\sqrt{\varepsilon}}{E}$ and number of training iterations
$T=(\Loss(\theta^{(1)})-\Loss(\theta^{*}))E\varepsilon^{-\frac{3}{2}}$ such that:
$$\frac{1}{T}\sum_{t=1}^T\|\dLoss(\theta^{(t)})\|_2\leq\mathcal{O}(\varepsilon)$$
where $\Loss(\cdot)$ is the loss function, $\theta^{(t)}$ and $\theta^{*}$ represent the parameter vector at iteration $t$ and the optimal parameter respectively.
\end{theorem}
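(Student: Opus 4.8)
The plan is to cast PipeGCN as an instance of inexact (biased) stochastic gradient descent, where the ``inexactness'' is the gradient error induced by using stale boundary features in the forward pass (Equ.~\ref{eq:pipegcn-forward}) and stale boundary feature gradients in the backward pass (Equ.~\ref{eq:pipegcn-backward}). Concretely, at iteration $t$ the algorithm updates $\theta^{(t+1)}=\theta^{(t)}-\eta\,\widetilde{g}^{(t)}$, where $\widetilde{g}^{(t)}$ is the stale gradient actually computed by Alg.~\ref{alg:pipegcn}, whereas the true full-graph gradient is $\dLoss(\theta^{(t)})$. The first step is to \emph{bound the gradient discrepancy} $\|\widetilde{g}^{(t)}-\dLoss(\theta^{(t)})\|_2$ in terms of the staleness, i.e.\ the amount by which the stale boundary features $h^{(t-1,\ell-1)}_u$ differ from their fresh counterparts $h^{(t,\ell-1)}_u$ (and similarly for the gradients $\delta_z^{(t-1,\ell+1)}$). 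Because the boundary features at consecutive iterations are produced by parameters $\theta^{(t-1)}$ and $\theta^{(t)}$ differing by a single gradient step of size $\eta$, I expect this staleness to be $O(\eta)$; propagating this through the (Lipschitz-continuous) layer maps and the aggregation/update functions should yield a bound of the form $\|\widetilde{g}^{(t)}-\dLoss(\theta^{(t)})\|_2\le c\,\eta$ for some constant $c$ absorbing the Lipschitz and smoothness constants, the propagation matrices $P_i$, and the network depth $L$.

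The second step is a \emph{standard smoothness (descent lemma) argument}. Assuming $\Loss$ is $L_f$-smooth (i.e.\ $\dLoss$ is Lipschitz), one writes
\begin{align}
\Loss(\theta^{(t+1)})&\le\Loss(\theta^{(t)})-\eta\langle\dLoss(\theta^{(t)}),\widetilde{g}^{(t)}\rangle+\frac{L_f\eta^2}{2}\|\widetilde{g}^{(t)}\|_2^2.\nonumber
\end{align}
Using the decomposition $\widetilde{g}^{(t)}=\dLoss(\theta^{(t)})+(\widetilde{g}^{(t)}-\dLoss(\theta^{(t)}))$ together with the Step-one bound, the inner-product term becomes $-\eta\|\dLoss(\theta^{(t)})\|_2^2$ plus a cross term controlled by Cauchy--Schwarz and the $O(\eta)$ staleness bound, while the quadratic term is $O(\eta^2)$. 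After rearranging, the key inequality should read
\begin{align}
\eta\|\dLoss(\theta^{(t)})\|_2^2&\le\Loss(\theta^{(t)})-\Loss(\theta^{(t+1)})+C_1\eta^2,\nonumber
\end{align}
where $C_1$ collects the cross-term and quadratic-term constants. The third step is to \emph{telescope} this over $t=1,\dots,T$: the loss differences sum to $\Loss(\theta^{(1)})-\Loss(\theta^{(T+1)})\le\Loss(\theta^{(1)})-\Loss(\theta^{*})$, giving
\begin{align}
\frac{1}{T}\sum_{t=1}^{T}\|\dLoss(\theta^{(t)})\|_2^2&\le\frac{\Loss(\theta^{(1)})-\Loss(\theta^{*})}{\eta T}+C_1\eta.\nonumber
\end{align}
Substituting the prescribed $\eta=\sqrt{\varepsilon}/E$ and $T=(\Loss(\theta^{(1)})-\Loss(\theta^{*}))E\varepsilon^{-3/2}$ makes both terms $O(\varepsilon)$, and a Jensen/Cauchy--Schwarz passage from the averaged squared norm to the averaged norm delivers the stated bound; the exponent $T^{-2/3}$ advertised in the introduction drops out of balancing $1/(\eta T)$ against $\eta$ at $\eta\sim T^{-1/3}$.

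The main obstacle I anticipate is Step one: rigorously controlling the staleness-induced gradient error. Unlike the classical asynchronous-SGD setting where staleness appears directly in the weights, here the staleness is buried inside the \emph{features and feature gradients}, which are themselves nonlinear, depth-$L$ functions of the parameters and are coupled across partitions through the boundary sets $\cB_i$. I would need careful assumptions—bounded features, bounded and Lipschitz activations $\sigma$ with bounded $\sigma'$, bounded weights, and bounded $\|P_i\|$—to show that a single-step parameter change $\|\theta^{(t)}-\theta^{(t-1)}\|_2=\eta\|\widetilde{g}^{(t-1)}\|_2=O(\eta)$ propagates to only an $O(\eta)$ perturbation in every layer's boundary features and gradients, and hence an $O(\eta)$ bias in $\widetilde{g}^{(t)}$. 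Establishing these propagation bounds layer-by-layer (likely by induction on $\ell$, with the depth $L$ entering the constant $E$) is where the real work lies; everything downstream is the routine inexact-SGD machinery. The formal statement of these regularity assumptions and the precise form of the constant $E$ would be deferred to the appendix.
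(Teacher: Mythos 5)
Your overall skeleton---an $O(\eta)$ bound on the staleness-induced gradient bias (your Step one, which matches the paper's layer-by-layer induction in Appendix~\ref{sec:theory}), followed by a descent lemma and telescoping---is the same as the paper's. The genuine gap is in your Step two's error accounting. You bound the cross term via Cauchy--Schwarz and the quadratic term as $O(\eta^2)$, arriving at $\eta\|\dLoss(\theta^{(t)})\|_2^2\le\Loss(\theta^{(t)})-\Loss(\theta^{(t+1)})+C_1\eta^2$. Dividing by $\eta$ and averaging gives $\frac{1}{T}\sum_{t=1}^T\|\dLoss(\theta^{(t)})\|_2^2\le\frac{\Loss(\theta^{(1)})-\Loss(\theta^{*})}{\eta T}+C_1\eta$, and with the prescribed $\eta=\sqrt{\varepsilon}/E$ the second term is $C_1\sqrt{\varepsilon}/E=O(\sqrt{\varepsilon})$, \emph{not} $O(\varepsilon)$. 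Equivalently, your inequality forces the balance $1/(\eta T)\sim\eta$, i.e.\ $\eta\sim T^{-1/2}$, and a rate of only $O(T^{-1/2})$---precisely the generic biased-SGD rate that the theorem is supposed to beat. (Your closing remark that balancing $1/(\eta T)$ against $\eta$ ``at $\eta\sim T^{-1/3}$'' yields $T^{-2/3}$ is internally inconsistent: that balance gives $\eta\sim T^{-1/2}$; the rate $T^{-2/3}$ requires balancing $1/(\eta T)$ against $\eta^2$.)

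The missing idea is that the bias $\delta^{(t)}=\dLosst(\theta^{(t)})-\dLoss(\theta^{(t)})$ is deterministic and itself $O(\eta)$, so it must enter the per-step inequality \emph{quadratically}, as $\eta\|\delta^{(t)}\|_2^2=O(\eta^3)$, rather than linearly in $\|\delta^{(t)}\|_2$ as $O(\eta^2)$. The paper gets this from an exact cancellation: choosing $\eta\le 1/L_f$ so that the coefficient of the quadratic term is at most $\eta/2$, one has
$$-\eta\left<\dLoss(\theta^{(t)}),\dLoss(\theta^{(t)})+\delta^{(t)}\right>+\frac{\eta}{2}\left\|\dLoss(\theta^{(t)})+\delta^{(t)}\right\|_2^2=-\frac{\eta}{2}\left\|\dLoss(\theta^{(t)})\right\|_2^2+\frac{\eta}{2}\left\|\delta^{(t)}\right\|_2^2,$$
the cross terms cancelling identically, so the per-step penalty is $\frac{\eta}{2}\|\delta^{(t)}\|_2^2\le\frac{\eta^3E^2}{2}$. (You can reach the same conclusion along your route by applying Young's inequality to the cross term, $c\eta^2\|\dLoss(\theta^{(t)})\|_2\le\frac{\eta}{4}\|\dLoss(\theta^{(t)})\|_2^2+c^2\eta^3$, and absorbing the $L_f\eta^2\|\dLoss(\theta^{(t)})\|_2^2$ part of the quadratic term into the negative term.) With this correction the averaged bound becomes $\frac{2}{\eta T}\left(\Loss(\theta^{(1)})-\Loss(\theta^{*})\right)+\eta^2E^2$, both terms are $O(\varepsilon)$ under the stated choices of $\eta$ and $T$, and the advertised $O(T^{-2/3})$ rate follows. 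Your Step one and the telescoping step are otherwise sound and agree with the paper's proof.
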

Therefore, \textit{\textbf{the convergence rate of PipeGCN is $\mathcal{O}(T^{-\frac{2}{3}})$, which is better than sampling-based method ($\mathcal{O}(T^{-\frac{1}{2}})$)}} \citep{chen2018stochastic,cong2021importance} \textit{\textbf{and close to full-graph training ($\mathcal{O}(T^{-1})$)}}. 
The formal version of the theorem and our detailed proof can be founded in Appendix~\ref{sec:theory}.

\subsection{The Proposed Smoothing Method}
\label{sec:smoothing}
To further improve the convergence of PipeGCN, we propose a smoothing method to reduce errors incurred by stale features/feature gradients at a minimal overhead.
Here we present the smoothing of feature gradients, and the same formulation also applies to features.
To improve the approximate gradients for each feature, fluctuations in feature gradients between adjacent iterations should be reduced.
Therefore, we apply a light-weight moving average to the feature gradients of each boundary node $v$ as follow:

\vspace{-0.5em}
$$\hat{\delta}_{z_v}^{(t,\ell)}=\gamma\hat{\delta}_{z_v}^{(t-1,\ell)}+(1-\gamma)\delta_{z_v}^{(t,\ell)}$$
\vspace{-1.0em}

where $\hat{\delta}_{z_v}^{(t,\ell)}$ is the smoothed feature gradient at layer $\ell$ and iteration $t$, and $\gamma$ is the decay rate. 
When integrating this smoothed feature gradient method into the backward pass, Equ.~\ref{eq:pipegcn-backward} can be rewritten as:
$$\hat{\delta}_{h_u}^{(t,\ell)}=\sum_{v:u\in\mathcal{N}(v)\setminus\mathcal{B}(v)}\frac{1}{d_v}\cdot \delta_{z_v}^{(t,\ell+1)}+\sum_{v:u\in\mathcal{B}(v)}\frac{1}{d_v}\cdot\hat{\delta}_{z_v}^{(t-1,\ell+1)}$$
\vspace{-1.0em}

Note that the smoothing of stale features and gradients can be independently applied to PipeGCN.

\section{Experiment Results}
\label{sec:experiment}  
\label{sec:setup}
We evaluate PipeGCN on four large-scale datasets, Reddit~\citep{hamilton2017inductive}, ogbn-products~\citep{hu2020open}, Yelp \citep{zeng2019graphsaint}, and ogbn-papers100M~\citep{hu2020open}. 
More details are provided in Tab.~\ref{tab:setups}.
\textit{\textbf{To ensure robustness and reproducibility, we fix (i.e., do not tune) the hyper-parameters and settings for PipeGCN and its variants throughout all experiments}}.
To implement partition parallelism (for both vanilla distributed GCN training and PipeGCN), the widely used METIS~\citep{karypis1998fast} partition algorithm is adopted for graph partition with its objective set to minimize the communication volume.
We implement PipeGCN in PyTorch~\citep{paszke2019pytorch} and DGL~\citep{wang2019dgl}. 
Experiments are conducted on a machine with 10 RTX-2080Ti (11GB), Xeon 6230R@2.10GHz (187GB), and PCIe3x16 connecting CPU-GPU and GPU-GPU. 
Only for ogbn-papers100M, we use 4 compute nodes (each contains 8 MI60 GPUs, an AMD EPYC 7642 CPU, and 48 lane PCI 3.0 connecting CPU-GPU and GPU-GPU) networked with 10Gbps Ethernet.
To support full-graph GCN training with the model sizes in Tab.~\ref{tab:setups}, the minimum required partition numbers are 2, 3, 5, 32 for Reddit, ogbn-products, Yelp, and ogbn-papers100M, respectively.

\begin{table}[t]
\scriptsize
\centering
\setlength{\tabcolsep}{0.7em}
\caption{Detailed experiment setups: graph datasets, GCN models, and training hyper-parameters.}
\vspace{-1em}
\label{tab:setups}
\begin{tabular}{c|cccccccc}
\specialrule{.1em}{.05em}{.05em}
\textbf{Dataset} & \textbf{\# Nodes} & \textbf{\# Edges} & \textbf{Feat. size} & \textbf{GraphSAGE model size} & \textbf{Optimizer} & \textbf{LearnRate} & \textbf{Dropout} & \textbf{\# Epoch} \\
\specialrule{.1em}{.05em}{.05em}
Reddit        & 233K & 114M & 602 & 4 layer, 256 hidden units & Adam & 0.01  & 0.5 & 3000 \\
ogbn-products & 2.4M & 62M  & 100 & 3 layer, 128 hidden units & Adam & 0.003 & 0.3 & 500  \\
Yelp          & 716K & 7.0M & 300 & 4 layer, 512 hidden units & Adam & 0.001 & 0.1 & 3000 \\
ogbn-papers100M  & 111M & 1.6B & 128 & 3 layer, 48 hidden units & Adam & 0.01 & 0.0 & 1000 \\
\specialrule{.1em}{.05em}{.05em}
\end{tabular}
\end{table}
For convenience, we here name all methods: vanilla partition-parallel training of GCNs (\textbf{GCN}), PipeGCN with feature gradient smoothing (\textbf{PipeGCN-G}), PipeGCN with feature smoothing (\textbf{PipeGCN-F}), and PipeGCN with both smoothing (\textbf{PipeGCN-GF}).
The default decay rate $\gamma$ for all smoothing methods is set to 0.95.

\begin{figure}[t]
  \vspace{-0.5em}
  \centering
  \includegraphics[width=1\linewidth]{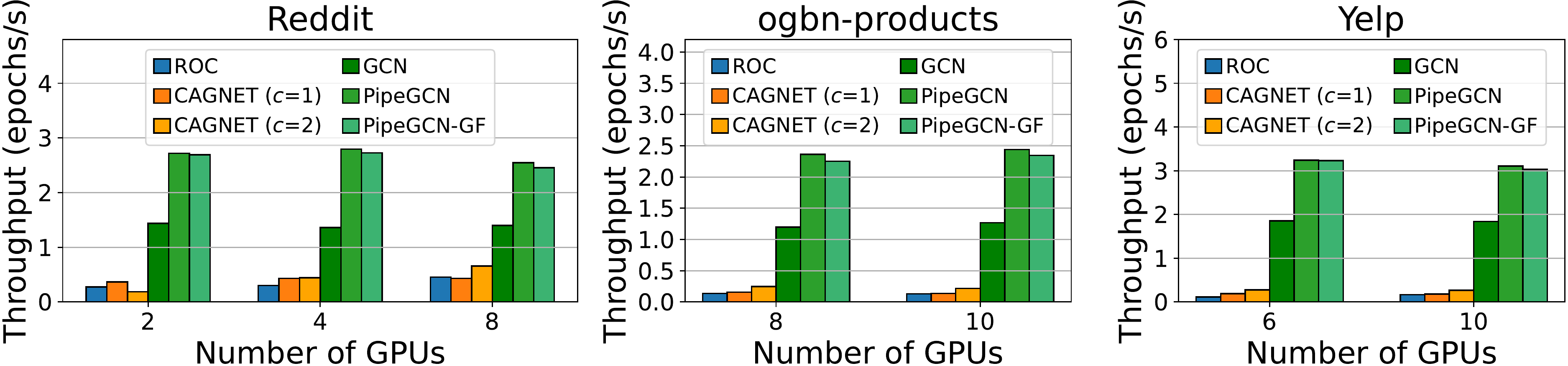}
  \vspace{-1.0em}
  \caption{Throughput comparison. Each partition uses one GPU (except CAGNET ($c$=2) uses two).}
  \label{fig:cmp}
\end{figure}

\subsection{Improving Training Throughput over Full-Graph Training Methods}
\label{sec:efficiency}
Fig.~\ref{fig:cmp} compares the training throughput between PipeGCN and the SOTA full-graph training methods (ROC~\citep{jia2020improving} and CAGNET~\citep{tripathy2020reducing}).
We observe that both vanilla partition-parallel training (GCN) and PipeGCN greatly outperform ROC and CAGNET across different number of partitions, because they avoid both the expensive CPU-GPU swaps (ROC) and the redundant node broadcast (CAGNET). 
Specifically, GCN is \textbf{3.1$\times$$\sim$16.4$\times$} faster than ROC and \textbf{2.1$\times$$\sim$10.2$\times$} faster than CAGNET ($c$=2).
PipeGCN further improves upon GCN, achieving a throughput improvement of \textbf{5.6$\times$$\sim$28.5$\times$} over ROC and \textbf{3.9$\times$$\sim$17.7$\times$} over CAGNET ($c$=2)\footnote{More detailed comparisons among full-graph training methods can be found in Appendix~\ref{sec:breakdown2}.}.
Note that we are not able to compare PipeGCN with NeuGraph~\citep{ma2019neugraph}, AliGraph~\citep{zhu2019aligraph}, and $P^3$~\citep{gandhi2021p3} as their code are not publicly available. 
Besides, Dorylus~\citep{thorpe2021dorylus} is not comparable, as it is not for regular GPU servers.
\textit{Considering the substantial performance gap between ROC/CAGNET and GCN, we focus on comparing GCN with PipeGCN for the reminder of the section}.

\subsection{Improving Training Throughput without Compromising Accuracy}
We compare the training performance of both test score and training throughput between GCN and PipeGCN in Tab.~\ref{tab:performance}.
We can see that \textit{PipeGCN without smoothing already achieves a comparable test score with the vanilla GCN training} on both Reddit and Yelp, and incurs only a negligible accuracy drop (-0.08\%$\sim$-0.23\%) on ogbn-products, while boosting the training throughput by \textbf{1.72$\times$$\sim$2.16$\times$} across all datasets and different number of partitions\footnote{More details regarding PipeGCN's advantages in training throughput can be found in Appendix~\ref{sec:breakdown}.}, thus validating the effectiveness of PipeGCN.

With the proposed smoothing method plugged in, \textit{PipeGCN-G/F/GF is able to compensate the dropped score of vanilla PipeGCN, achieving an equal or even better test score as/than the vanilla GCN training} (without staleness), e.g., 97.14\% vs. 97.11\% on Reddit, 79.36\% vs. 79.14\% on ogbn-products and 65.28\% vs. 65.26\% on Yelp.
Meanwhile, PipeGCN-G/F/GF enjoys a similar throughput improvement as vanilla PipeGCN, thus validating the negligible overhead of the proposed smoothing method.  
Therefore, \textit{\textbf{pipelined transfer of features and gradients greatly improves the training throughput while maintaining the full-graph accuracy}}.

Note that our distributed GCN training methods consistently achieve higher test scores than SOTA sampling-based methods for GraphSAGE-based models reported in \citep{zeng2019graphsaint} and \citep{hu2020open}, confirming that full-graph training is preferred to obtain better GCN models. \textcolor{black}{For example, the best sampling-based method achieves a 96.6\% accuracy on Reddit \citep{zeng2019graphsaint} while full-graph GCN training achieves 97.1\%, and PipeGCN improves the accuracy by 0.28\% over sampling-based GraphSAGE models on ogbn-products \citep{hu2020open}.}
This advantage of full-graph training is also validated by recent works~\citep{jia2020improving,tripathy2020reducing,liu2022exact,wan2022bns}.
\begin{table}[t]
\centering
\caption{Training performance comparison among vanilla partition-parallel training (GCN) and PipeGCN variants (PipeGCN*), where we report the test accuracy for Reddit and ogbn-products, and the F1-micro score for Yelp. Highest performance is in bold.}
\vspace{-0.5em}
\label{tab:performance}
\scriptsize
\setlength{\tabcolsep}{0.3em}
\begin{tabular}{C{1.5cm}L{1.4cm}C{1.55cm}C{1.90cm}|C{1.5cm}L{1.4cm}C{1.55cm}C{1.90cm}}
\specialrule{.1em}{.05em}{.05em}
\textbf{Dataset} & \textbf{Method} & \textbf{Test Score (\%)} & \textbf{Throughput} & \textbf{Dataset} & \textbf{Method} & \textbf{Test Score (\%)} & \textbf{Throughput} \\
\specialrule{.1em}{.05em}{.05em}
\multirow{5}{*}{\begin{tabular}[c]{@{}c@{}}Reddit\\ (2 partitions)\end{tabular}}
 & GCN & 97.11$\pm$0.02 & 1$\times$ (1.94 epochs/s) & \multirow{5}{*}{\begin{tabular}[c]{@{}c@{}}Reddit\\ (4 partitions)\end{tabular}}
 & GCN & \textbf{97.11$\pm$0.02} & 1$\times$ (2.07 epochs/s) \\
 & PipeGCN     & 97.12$\pm$0.02 & \textbf{1.91$\times$} &  & PipeGCN     & 97.04$\pm$0.03 & \textbf{2.12$\times$} \\
 & PipeGCN-G   & \textbf{97.14$\pm$0.03} & 1.89$\times$ &  & PipeGCN-G   & 97.09$\pm$0.03 & 2.07$\times$ \\
 & PipeGCN-F   & 97.09$\pm$0.02 & 1.89$\times$ &  & PipeGCN-F   & 97.10$\pm$0.02 & 2.10$\times$ \\
 & PipeGCN-GF  & 97.12$\pm$0.02 & 1.87$\times$ &  & PipeGCN-GF  & 97.10$\pm$0.02 & 2.06$\times$ \\
\hline
\multirow{5}{*}{\begin{tabular}[c]{@{}c@{}}ogbn-products\\ (5 partitions)\end{tabular}}
 & GCN & 79.14$\pm$0.35 & 1$\times$ (1.45 epochs/s) & \multirow{5}{*}{\begin{tabular}[c]{@{}c@{}}ogbn-products\\ (10 partitions)\end{tabular}}
 & GCN & 79.14$\pm$0.35 & 1$\times$ (1.28 epochs/s) \\
 & PipeGCN     & 79.06$\pm$0.42 & \textbf{1.94$\times$} &  & PipeGCN     & 78.91$\pm$0.65 & \textbf{1.87$\times$} \\
 & PipeGCN-G   & 79.20$\pm$0.38 & 1.90$\times$ &  & PipeGCN-G   & 79.08$\pm$0.58 & 1.82$\times$ \\
 & PipeGCN-F   & \textbf{79.36$\pm$0.38} & 1.90$\times$ &  & PipeGCN-F   & \textbf{79.21$\pm$0.31} & 1.81$\times$ \\
 & PipeGCN-GF  & 78.86$\pm$0.34 & 1.91$\times$ &  & PipeGCN-GF  & 78.77$\pm$0.23 & 1.82$\times$ \\
\hline
\multirow{5}{*}{\begin{tabular}[c]{@{}c@{}}Yelp\\ (3 partitions)\end{tabular}}
 & GCN & 65.26$\pm$0.02 & 1$\times$ (2.00 epochs/s) & \multirow{5}{*}{\begin{tabular}[c]{@{}c@{}}Yelp\\ (6 partitions)\end{tabular}}
 & GCN & 65.26$\pm$0.02 & 1$\times$ (2.25 epochs/s) \\
 & PipeGCN     & \textbf{65.27$\pm$0.01} & \textbf{2.16$\times$} &  & PipeGCN     & 65.24$\pm$0.02 & \textbf{1.72$\times$} \\
 & PipeGCN-G   & 65.26$\pm$0.02 & 2.15$\times$ &  & PipeGCN-G   & \textbf{65.28$\pm$0.02} & 1.69$\times$  \\
 & PipeGCN-F   & 65.26$\pm$0.03 & 2.15$\times$  &  & PipeGCN-F   & 65.25$\pm$0.04 & 1.68$\times$ \\
 & PipeGCN-GF  & 65.26$\pm$0.04 & 2.11$\times$ &  & PipeGCN-GF  & 65.26$\pm$0.04 & 1.67$\times$ \\
\specialrule{.1em}{.05em}{.05em}
\end{tabular}
\end{table}

\begin{comment}
\begin{table}[!t]
\centering
\caption{Epoch time comparison among vanilla dist. GCN training and variants of PipeGCN.}
\begin{tabular}{l|c|c|c|c|c|c|c|c|c}
\label{tab:epoch-time}
Dataset & \multicolumn{3}{c|}{Reddit} & \multicolumn{3}{c|}{ogbn-products} & \multicolumn{3}{c}{Yelp} \\ \hline
\# Partitions & 2 & 4 & 8 & 5 & 8 & 10 & 3 & 6 & 10 \\ \hline
Vanilla GCN & .5154 (.3393) & .4828 (.4002) &  & .6906 (.5260) & .7791 (.6992) &  & .5010 (.3064) & .4443 (.3414)& \\ \hline
PipeGCN     & .2692 (.0002) & .2272 (.1091) &  & .3568 (.1474) & .4162 (.2626) &  & .2316 (.0206) & .2578 (.1417)& \\ \hline
PipeGCN-G   & .2732 (.0002) & .2335 (.1108) &  & .3637 (.1451) & .4288 (.2745) &  & .2329 (.0208) & .2636 (.1462)& \\ \hline
PipeGCN-F   & .2727 (.0002) & .2297 (.1261) &  & .3626 (.1564) & .4295 (.2745) &  & .2328 (.0212) & .2645 (.1454) & \\ \hline
PipeGCN-GF  & .2753 (.0002) & .2342 (.1150) &  & .3624 (.1484) & .4290 (.2785) &  & .2371 (.0227) & .2662 (.1495)&
\end{tabular}
\end{table}
\end{comment}

\subsection{Maintaining Convergence Speed}
\label{sec:curve}
\begin{figure*}[t]
\begin{center}
\includegraphics[width=1.0\linewidth]{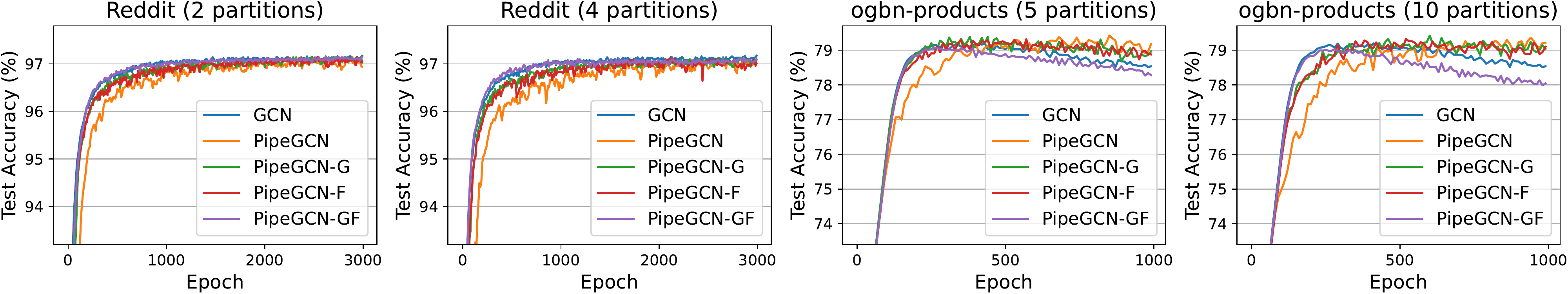}
\vspace{-1.5em}
\caption{Epoch-to-accuracy comparison among vanilla partition-parallel training (GCN) and PipeGCN variants (PipeGCN*), where \textit{PipeGCN and its variants achieve a similar convergence as the vanilla training (without staleness) but are twice as fast in wall-clock time} (see Tab.~\ref{tab:performance}).}
\vspace{-0.5em}
\label{fig:curve}
\end{center}
\end{figure*}

To understand PipeGCN's influence on the convergence speed, we compare the training curve among different methods in Fig.~\ref{fig:curve}. 
We observe that the convergence of PipeGCN without smoothing is still comparable with that of the vanilla GCN training, although PipeGCN converges slower at the early phase of training and then catches up at the later phase, due to the staleness of boundary features/gradients.
With the proposed smoothing methods, \textit{PipeGCN-G/F boosts the convergence substantially and matches the convergence speed of vanilla GCN training}. 
There is no clear difference between PipeGCN-G and PipeGCN-F.
Lastly, with combined smoothing of features and gradients, \textit{PipeGCN-GF can acheive the same or even slightly better convergence speed as vanilla GCN training} (e.g., on Reddit) but can overfit gradually similar to the vanilla GCN training, which is further investigated in Sec.~\ref{sec:error}.
Therefore, \textit{\textbf{PipeGCN maintains the convergence speed w.r.t the number of epochs while reduces the end-to-end training time by around 50\% thanks to its boosted training throughput}} (see Tab.~\ref{tab:performance}).

\subsection{Benefit of Staleness Smoothing Method} 
\label{sec:error}
\niparagraph{Error Reduction and Convergence Speedup.}
To understand why the proposed smoothing technique (Sec.~\ref{sec:smoothing}) speeds up convergence, we compare the error incurred by the stale communication between PipeGCN and PipeGCN-G/F. 
The error is calculated as the Frobenius-norm  
of the gap between the correct gradient/feature and the stale gradient/feature used in PipeGCN training. 
Fig.~\ref{fig:error} compares the error at each GCN layer. 
We can see that \textit{the proposed smoothing technique (PipeGCN-G/F) reduces the error of staleness substantially} (from the base version of PipeGCN) and this benefit consistently holds across different layers in terms of both feature and gradient errors,
validating the effectiveness of our smoothing method and explaining its improvement to the convergence speed.

\begin{figure}[t]
\centering
\begin{minipage}{.6\textwidth}
  \centering
  \includegraphics[width=0.97\linewidth]{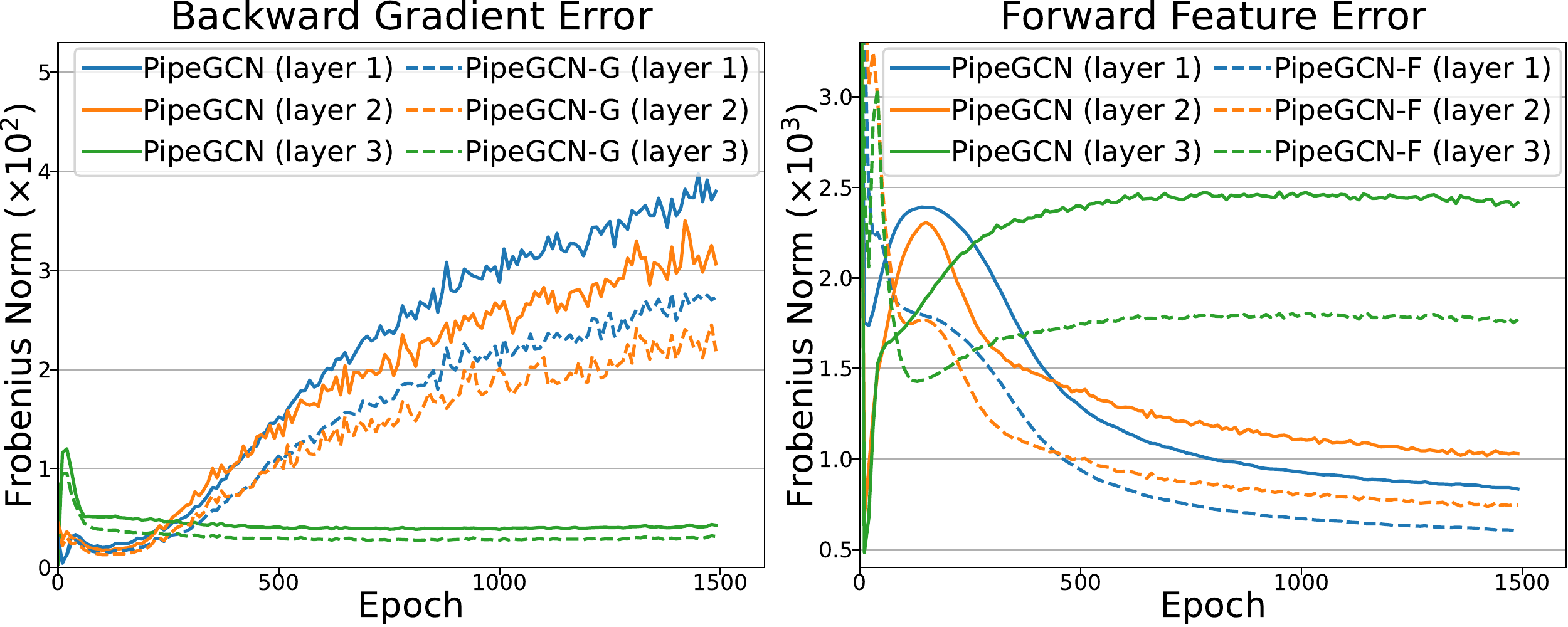}
  \vspace{-0.5em}
  \caption{Comparison of the resulting feature gradient error and feature error from PipeGCN and PipeGCN-G/F at each GCN layer on Reddit (2 partitions). PipeGCN-G/F here uses a default smoothing decay rate of $0.95$.}\label{fig:error}
\end{minipage}%
\begin{minipage}{.4\textwidth}
  \centering
  \includegraphics[width=0.97\linewidth]{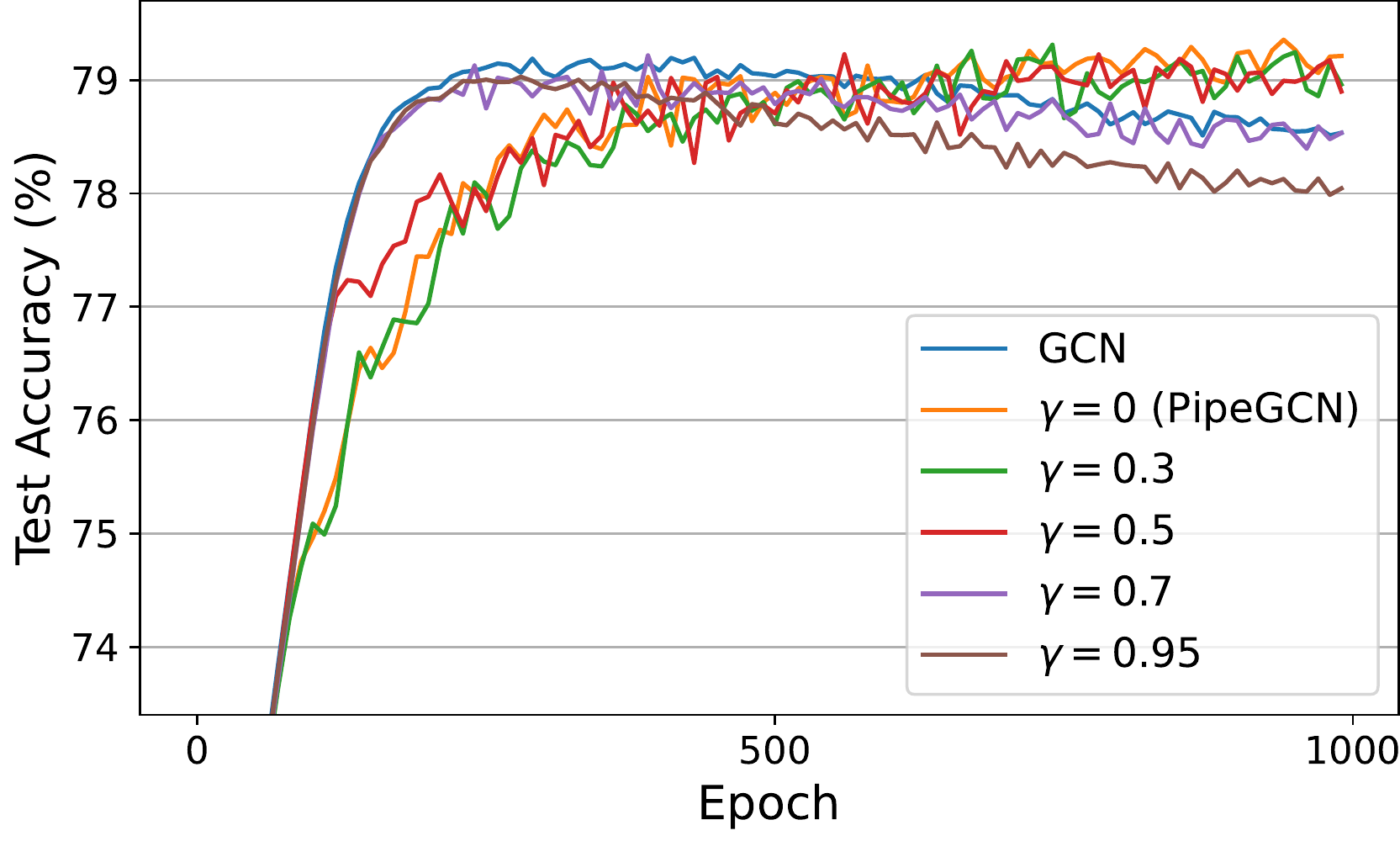}
    \vspace{-0.5em}
  \caption{Test-accuracy convergence comparison among different smoothing decay rates $\gamma$ in PipeGCN-GF on ogbn-products (10 partitions).}
  \label{fig:decay}
\end{minipage}
\end{figure}
\begin{figure*}[t]
\centering
\includegraphics[width=1.0\linewidth]{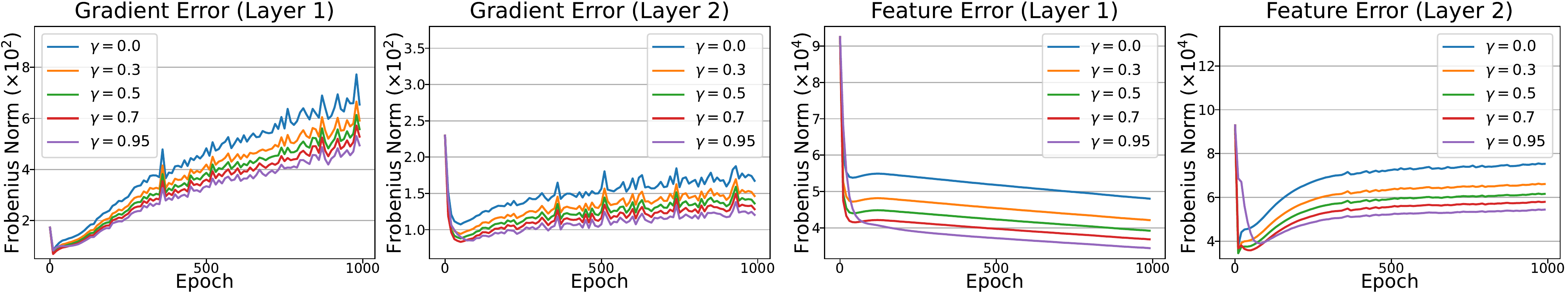}
\vspace{-1.5em}
\caption{Comparison of the resulting feature gradient error and feature error when adopting different decay rates $\gamma$ at each GCN layer on ogbn-products (10 partitions).}
\label{fig:decay-error}
\end{figure*}

\niparagraph{Overfitting Mitigation.}
To understand the effect of staleness smoothing on model overfitting, we also evaluate the test-accuracy convergence under different decay rates $\gamma$ in Fig.~\ref{fig:decay}.
Here ogbn-products is adopted as the study case because the distribution of its test set largely differs from that of its training set.
From Fig.~\ref{fig:decay}, we observe that smoothing with a large $\gamma$ ($0.7$/$0.95$) offers a fast convergence, i.e., close to the vanilla GCN training, but overfits rapidly. 
To understand this issue, we further provide detailed comparisons of the errors incurred under different $\gamma$ in Fig.~\ref{fig:decay-error}.
We can see that a larger $\gamma$ enjoys lower approximation errors and makes the gradients/features more stable, thus improving the convergence speed. The increased stability on the training set, however, constrains the model from exploring a more general minimum point on the test set, thus leading to overfitting as the vanilla GCN training.
In contrast, \textit{a small $\gamma$ ($0$ $\sim$ $0.5$) mitigates this overfitting and achieves a better accuracy} (see Fig.~\ref{fig:decay}).
But a too-small $\gamma$ (e.g., $0$) gives a high error for both stale features and gradients (see Fig.~\ref{fig:decay-error}), thus suffering from a slower convergence.
Therefore, a trade-off between convergence speed and achievable optimality exists between different smoothing decay rates,
and $\gamma=0.5$ combines the best of both worlds in this study.

\subsection{Scaling Large Graph Training over Multiple Servers}
\begin{wrapfigure}[7]{R}{0.45\textwidth}
\vspace{-1.2em}
\footnotesize
\centering
\setlength{\tabcolsep}{0.2em}
\captionof{table}{Comparison of epoch training time on ogbn-papers100M.}
\vspace{-0.5em}
\label{tab:breakdown}
\begin{tabular}{c|cc}
\specialrule{.1em}{.05em}{.05em}
\textbf{Method} & \textbf{Total}            & \textbf{Communication} \\
\specialrule{.1em}{.05em}{.05em}
GCN         & $1.00\times$ (10.5s)          & $1.00\times$ (6.6s) \\ 
PipeGCN     & \textbf{$0.62\times$} (6.5s)  & \textbf{$0.39\times$} (2.6s) \\ 
PipeGCN-GF  & $0.64\times$ (6.7s)           & $0.42\times$ (2.8s) \\
\specialrule{.1em}{.05em}{.05em}
\end{tabular}
\end{wrapfigure}
To further test the capability of PipeGCN, we scale up the graph size to ogbn-papers100M and train GCN over multiple GPU servers with 32 GPUs.
Tab.~\ref{tab:breakdown} shows that even at such a large-scale setting where communication overhead dominates, PipeGCN still reduce communication time by 61\%, leading to a total training time reduction of 38\% compared to the vanilla GCN baseline
\footnote{More experiments on multi-server training can be found in Appendix~\ref{sec:multi-node}.}.

\section{Conclusion}
In this work, we propose a new method, PipeGCN, for efficient full-graph GCN training. 
PipeGCN pipelines communication with computation in distributed GCN training to hide the prohibitive communication overhead. 
More importantly, we are the first to provide convergence analysis for GCN training with both stale features and feature gradients, and further propose a light-weight smoothing method for convergence speedup.
Extensive experiments validate the advantages of PipeGCN over both vanilla GCN training (without staleness) and state-of-the-art full-graph training.

\section{Acknowledgement}
The work is supported by the National Science Foundation (NSF) through the MLWiNS program (Award number: 2003137), the CC$^*$ Compute program (Award number: 2019007), and the NeTS program (Award number: 1801865).

\bibliographystyle{iclr2022_conference}
\bibliography{pipegcn}

\appendix
\newpage
\clearpage
\section{Convergence Proof}
\label{sec:theory}
In this section, we prove the convergence of PipeGCN. Specifically, we first figure out that when the model is updated via gradient descent, the change of intermediate features and their gradients are bounded by a constant which is proportional to learning rate $\eta$ under standard assumptions. Based on this, we further demonstrate that the error occurred by the staleness is proportional to $\eta$, which guarantees that the gradient error is bounded by $\eta E$ where $E$ is defined in Corollary~\ref{crl:E}, and thus PipeGCN converges in $\mathcal{O}(\varepsilon^{-\frac{3}{2}})$ iterations.

\subsection{Notations and Assumptions}
For a given graph $\mathcal{G}=(\mathcal{V},\mathcal{E})$ with an adjacency matrix $A$, feature matrix $X$, we define the propagation matrix $P$ as $P\coloneqq\widetilde{D}^{-1/2}\widetilde{A}\widetilde{D}^{-1/2}$, where $\widetilde{A}=A+I,\widetilde{D}_{u,u}=\sum_v\widetilde{A}_{u,v}$. One GCN layer performs one step of feature propagation \citep{kipf2016semi} as formulated below
\begin{align}
H^{(0)}&=X\notag\\
\Zl&=P\Hlp\Wl\notag\\
\Hl&=\sigma(\Zl)\notag
\end{align}

where $\Hl$, $\Wl$, and $\Zl$ denote the embedding matrix, the trainable weight matrix, and the intermediate embedding matrix in the $\ell$-th layer, respectively, and $\sigma$ denotes a non-linear activation function. For an $L$-layer GCN, the loss function is denoted by $\Loss(\theta)$ where $\theta=\text{vec}[W^{(1)},W^{(2)},\cdots,W^{(L)}]$. We define the $\ell$-th layer as a function $\fl(\cdot,\cdot)$.
$$\fl(\Hlp,\Wl)\coloneqq\sigma(P\Hlp\Wl)$$
Its gradient w.r.t. the input embedding matrix can be represented as
$$\Jlp=\nabla_H\fl(\Jl,\Hlp,\Wl)\coloneqq P^\top\Ml[\Wl]^\top$$
and its gradient w.r.t. the weight can be represented as
$$\Gl=\nabla_W\fl(\Jl,\Hlp,\Wl)\coloneqq [P\Hlp]^\top\Ml$$
where $\Ml=\Jl\circ\sigma'(P\Hlp\Wl)$ and $\circ$ denotes Hadamard product.

For partition-parallel training, we can split $P$ into two parts $P=\Pin+\Pbd$ where $\Pin$ represents intra-partition propagation and $\Pbd$ denotes inter-partition propagation. For PipeGCN, we can represent one GCN layer as below
\begin{align}
\widetilde{H}^{(t,0)}&=X\notag\\
\Ztl&=\Pin\Htlcp\Wtl+\Pbd\Htlpp\Wtl\notag\\
\Htl&=\sigma(\Ztl)\notag
\end{align}
where $t$ is the epoch number and $\Wtl$ is the weight at epoch $t$ layer $\ell$. We define the loss function for this setting as $\widetilde{\Loss}(\widetilde{\theta}^{(t)})$ where $\widetilde{\theta}^{(t)}=\text{vec}[\widetilde{W}^{(t,1)},\widetilde{W}^{(t,2)},\cdots,\widetilde{W}^{(t,L)}]$.
We can also summarize the layer as a function $\ftl(\cdot,\cdot)$
$$\ftl(\Htlcp,\Wtl)\coloneqq\sigma(\Pin\Htlcp\Wtl+\Pbd\Htlpp\Wtl)$$
Note that $\Htlpp$ is not a part of the input of $\ftl(\cdot,\cdot)$ because it is a constant for the $t$-th epoch. The corresponding backward propagation follows the following computation
$$\Jtlcp=\nabla_H\ftl(\Jtl,\Htlcp,\Wtl)$$
$$\Gtl=\nabla_W\ftl(\Jtl,\Htlcp,\Wtl)$$
where
$$\Mtl=\Jtl\circ\sigma'(\Pin\Htlcp\Wtl+\Pbd\Htlpp\Wtl)$$
$$\nabla_H\ftl(\Jtl,\Htlcp,\Wtl)\coloneqq\Pin^\top\Mtl[\Wtl]^\top+\Pbd^\top\Mtlpc[\Wtlpc]^\top$$
$$\nabla_W\ftl(\Jtl,\Htlcp,\Wtl)\coloneqq[\Pin\Htlcp+\Pbd\Htlpp]^\top\Mtl$$
Again, $\Jtlpc$ is not a part of the input of $\nabla_H\ftl(\cdot,\cdot,\cdot)$ or $\nabla_W\ftl(\cdot,\cdot,\cdot)$ because it is a constant for epoch $t$. Finally, we define $\nabla\widetilde{\Loss}(\widetilde{\theta}^{(t)})=\text{vec}[\widetilde{G}^{(t,1)},\widetilde{G}^{(t,2)},\cdots,\widetilde{G}^{(t,L)}]$. It should be highlighted that the `gradient' $\nabla_H\ftl(\cdot,\cdot,\cdot)$, $\nabla_W\ftl(\cdot,\cdot,\cdot)$ and $\nabla\widetilde{\Loss}(\widetilde{\theta}^{(t)})$ are not the standard gradient for the corresponding forward process due to the stale communication. Properties of gradient cannot be directly applied to these variables.

Before proceeding our proof, we make the following standard assumptions about the adopted GCN architecture and input graph.
\begin{assumption}\label{asm:loss-smooth}
The loss function Loss$(\cdot,\cdot)$ is $C_\text{loss}$-Lipschitz continuous and $L_\text{loss}$-smooth w.r.t. to the input node embedding vector, i.e., $|Loss(h^{(L)},y)-Loss(h'^{(L)}, y)|\leq C_\text{loss}\|h^{(L)}-h'^{(L)}\|_2$ and $\|\nabla Loss(h^{(L)},y)-\nabla Loss(h'^{(L)}, y)\|_2\leq L_\text{loss}\|h^{(L)}-h'^{(L)}\|_2$ where $h$ is the predicted label and $y$ is the correct label vector.
\end{assumption}

\begin{assumption}\label{asm:act-smooth}
The activation function $\sigma(\cdot)$ is $C_\sigma$-Lipschitz continuous and $L_\sigma$-smooth, i.e., $\|\sigma(z^{(\ell)})-\sigma(z'^{(\ell)})\|_2\leq C_\sigma\|z^{(\ell)}-z'^{(\ell)}\|_2$ and $\|\sigma'(z^{(\ell)})-\sigma'(z'^{(\ell)})\|_2\leq L_\sigma\|z^{(\ell)}-z'^{(\ell)}\|_2$.
\end{assumption}

\begin{assumption}\label{asm:bound}
For any $\ell\in[L]$, the norm of weight matrices, the propagation matrix, and the input feature matrix are bounded: $\|\Wl\|_F$
$\leq B_W,\|P\|_F\leq B_P,\|X\|_F\leq B_X$.
(This generic assumption is also used in \citep{chen2018stochastic,liao2020pac,garg2020generalization,cong2021importance}.)
\end{assumption}

\subsection{Bounded Matrices and Changes}

\begin{lemma}
For any $\ell\in[L]$, the Frobenius norm of node embedding matrices, gradient passing from the $\ell$-th layer node embeddings to the $(\ell-1)$-th, gradient matrices are bounded, i.e., 
$$\|\Hl\|_F,\|\Htl\|_F\leq B_H,$$
$$\|\Jl\|_F,\|\Jtl\|_F\leq B_J,$$
$$\|\Ml\|_F,\|\Mtl\|_F\leq B_M,$$
$$\|\Gl\|_F,\|\Gtl\|_F\leq B_G$$
where
$$B_H=\max_{1\leq\ell\leq L}(C_\sigma B_PB_W)^\ell B_X$$
$$B_J=\max_{2\leq\ell\leq L}(C_\sigma B_PB_W)^{L-\ell} C_{loss}$$
$$B_M=C_\sigma B_J$$
$$B_G=B_PB_HB_M$$

\end{lemma}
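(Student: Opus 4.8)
I would establish the four bounds in the order $H\to J\to M\to G$, first for the vanilla quantities and then for their stale PipeGCN counterparts, because in each case the stated constant is exactly what the corresponding recursion produces; the real work is only to set up each recursion and check that it closes. Throughout I use submultiplicativity of the Frobenius norm, $\|AB\|_F\le\|A\|_F\|B\|_F$, the entrywise Hadamard bound $\|A\circ B\|_F\le\|A\|_F\max_{i,j}|B_{ij}|$, and Assumptions~\ref{asm:loss-smooth}--\ref{asm:bound}. Note that $C_\sigma$-Lipschitzness of $\sigma$ gives both $|\sigma'|\le C_\sigma$ entrywise and, with $\sigma(0)=0$, the bound $\|\sigma(Z)\|_F\le C_\sigma\|Z\|_F$.

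For the vanilla embeddings I would argue by forward induction on $\ell$: the base case is $\|H^{(0)}\|_F=\|X\|_F\le B_X$, and the step $\|\Hl\|_F=\|\sigma(P\Hlp\Wl)\|_F\le C_\sigma B_P B_W\|\Hlp\|_F$ telescopes to $\|\Hl\|_F\le(C_\sigma B_P B_W)^{\ell}B_X\le B_H$. For the gradients I would run a \emph{backward} induction: the base case $\|\JL\|_F\le C_\text{loss}$ follows from $C_\text{loss}$-Lipschitzness of the loss, and from $\Jlp=P^\top\Ml[\Wl]^\top$ with $\Ml=\Jl\circ\sigma'(\cdot)$ (hence $\|\Ml\|_F\le C_\sigma\|\Jl\|_F$) I get $\|\Jlp\|_F\le C_\sigma B_P B_W\|\Jl\|_F$, which telescopes to $\|\Jl\|_F\le(C_\sigma B_P B_W)^{L-\ell}C_\text{loss}\le B_J$. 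The last two bounds are then immediate: $\|\Ml\|_F\le C_\sigma\|\Jl\|_F\le C_\sigma B_J=B_M$, and $\|\Gl\|_F=\|[P\Hlp]^\top\Ml\|_F\le B_P B_H B_M=B_G$.

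The stale quantities require a \emph{simultaneous} induction on the epoch index $t$ and the layer index $\ell$, since $\Htl$ depends on both $\Htlcp$ (current epoch) and $\Htlpp$ (previous epoch), and $\Jtl$ likewise pulls in $\Mtlpc$ and $\Wtlpc$ from the previous epoch. The one new structural ingredient is the split $P=\Pin+\Pbd$: as $\Pin$ and $\Pbd$ have disjoint support, $\|\Pin\|_F,\|\Pbd\|_F\le\|P\|_F\le B_P$, and the mixed aggregation can be rewritten as $\Pin\Htlcp+\Pbd\Htlpp=P\widehat{H}$, where $\widehat{H}$ takes its inner-node rows from epoch $t$ and its boundary-node rows from epoch $t-1$. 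Feeding $P\widehat{H}$ through the same steps as the vanilla case then reproduces the forward recursion with the factor $B_P$ (not $2B_P$), and an analogous rewriting handles the backward pass.

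The main obstacle is closing these stale recursions with the \emph{same} constants $B_H,B_J$. Because $\widehat{H}$ mixes rows drawn from two different epochs, the naive estimate $\|\widehat{H}\|_F\le\|\Htlcp\|_F+\|\Htlpp\|_F$ carries a spurious constant factor, and even a block-wise bound (inner-row mass of $\Htlcp$ plus boundary-row mass of $\Htlpp$) only yields $\|\widehat H\|_F^2\le 2B_H^2$ in the worst case. I would therefore make the recursion close by a block-wise strong induction over $(t,\ell)$ that controls the inner- and boundary-row blocks separately against the uniform bound, absorbing at worst a benign constant into the definitions of the $B$'s; this is harmless because the later convergence argument (Theorem~\ref{thm:main}) only needs the \emph{existence} of such uniform bounds. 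Once this cross-epoch bookkeeping is settled, every remaining inequality is routine submultiplicativity, and the stale ``gradients'' $\nabla_H\ftl,\nabla_W\ftl$ never need to be treated as genuine gradients—only as matrices satisfying these recursions.
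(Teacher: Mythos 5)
Your proposal takes essentially the same route as the paper: vanilla bounds by forward induction on $\ell$ for $\Hl$ and backward induction for $\Jl$ (the paper does not reprove these, citing Proposition~1 of \citep{cong2021importance}), then $\|\Ml\|_F\leq C_\sigma\|\Jl\|_F$ and $\|\Gl\|_F\leq B_PB_HB_M$ by submultiplicativity, and a simultaneous induction over $(t,\ell)$ for the stale quantities. The one place you genuinely diverge is the step you single out as the crux. The paper closes the stale recursion by writing
\begin{align*}
\|\Htl\|_F=\|\sigma(\Pin\Htlcp\Wtl+\Pbd\Htlpp\Wtl)\|_F\leq C_\sigma B_W\|\Pin+\Pbd\|_F\,(C_\sigma B_PB_W)^{\ell-1}B_X ,
\end{align*}
i.e., it recombines the two epoch-mixed terms at the common inductive bound with the single factor $\|\Pin+\Pbd\|_F\leq B_P$, and it closes the $\Jtl$ recursion the same way. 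Read literally, that inequality is not valid: exactly as you note, $\Pin\Htlcp\Wtl+\Pbd\Htlpp\Wtl=P\widehat{H}\Wtl$ where $\widehat{H}$ takes inner-node rows from epoch $t$ and boundary rows from epoch $t-1$, and the block-wise estimate only gives $\|\widehat{H}\|_F\leq\sqrt{2}\max\{\|\Htlcp\|_F,\|\Htlpp\|_F\}$; simple examples of matrices with disjoint column supports show the $\sqrt{2}$ can be attained, so the recursion does not close at exactly $B_H$, $B_J$. Your repair --- a block-wise strong induction over $(t,\ell)$ that absorbs a benign per-layer constant into the definitions of the $B$'s --- is the honest version of the paper's step, and your justification for why this is acceptable is also the right one: Theorem~\ref{thm:main} (through Corollary~\ref{crl:E}) only needs the existence of uniform bounds, not these exact constants. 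In short, same decomposition and same recursions, but your handling of the epoch-mixing step is more careful than the paper's, whose stated constants are correct only up to such benign $2^{\ell/2}$-type factors.
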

\begin{proof}
The proof of $\|\Hl\|_F\leq B_H$ and $\|\Jl\|_F\leq B_J$ can be found in Proposition 1 in \citep{cong2021importance}. By induction, 
\begin{align*}
\|\Htl\|_F
    &=\|\sigma(\Pin\Htlcp\Wtl+\Pbd\Htlpp\Wtl)\|_F \\
    &\leq C_\sigma B_W\|\Pin+\Pbd\|_F (C_\sigma B_PB_W)^{\ell-1}B_X\\
    &\leq (C_\sigma B_PB_W)^\ell B_X
\end{align*}
\begin{align*}
\|\Jtlcp\|_F
    &=\left\|\Pin^\top\left(\Jtl\circ\sigma'(\Ztl)\right)[\Wtl]^\top+\Pbd^\top\left(\Jtlpc\circ\sigma'(\Ztlpc)\right)[\Wtlpc]^\top\right\|_F \\
    &\leq C_\sigma B_W\|\Pin+\Pbd\|_F (C_\sigma B_PB_W)^{L-\ell}C_{\text{loss}}\\
    &\leq (C_\sigma B_PB_W)^{L-\ell+1}C_{\text{loss}}
\end{align*}
$$\|\Ml\|_F=\|\Jl\circ\sigma'(\Zl)\|_F\leq C_\sigma B_J$$
$$\|\Mtl\|_F=\|\Jtl\circ\sigma'(\Ztl)\|_F\leq C_\sigma B_J$$
\begin{align*}
\Gl
    &=[P\Hlp]^\top\Ml\\
    &\leq B_PB_HB_M
\end{align*}
\begin{align*}
\Gtl
    &=[\Pin\Htlcp+\Pbd\Htlpp]^\top\Mtl\\
    &\leq B_PB_HB_M
\end{align*}
\end{proof}

Because the gradient matrices are bounded, the weight change is bounded.
\begin{corollary}\label{crl:W-change}
For any $t$, $\ell$, $\|\Wtl-\Wtlpc\|_F\leq B_{\Delta W}=\eta B_G$ where $\eta$ is the learning rate.
\end{corollary}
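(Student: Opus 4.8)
The plan is to reduce the claim directly to the gradient-descent update rule together with the gradient bound just established in the Lemma. First I would recall that PipeGCN updates the weights by plain gradient descent: the step $W_t\leftarrow W_{t-1}-\eta\cdot G$ in Algorithm~\ref{alg:pipegcn}, read layer-by-layer in the notation of this section, is exactly
$$\Wtl=\Wtlpc-\eta\,\widetilde{G}^{(t-1,\ell)},$$
where $\widetilde{G}^{(t-1,\ell)}=\nabla_W\widetilde{f}^{(t-1,\ell)}$ is the (stale-communication) weight gradient accumulated across partitions at the previous epoch. The only care needed here is to match the epoch indexing between the algorithm and the analysis, i.e.\ to confirm that the increment applied to layer $\ell$ between epochs $t-1$ and $t$ is precisely $-\eta$ times the layer-$\ell$ weight gradient, which holds because $G=\mathrm{AllReduce}(G_i)$ merely sums the per-partition gradients and does not otherwise rescale the step.

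Given this, subtracting $\Wtlpc$ from both sides and taking the Frobenius norm yields
$$\|\Wtl-\Wtlpc\|_F=\eta\,\|\widetilde{G}^{(t-1,\ell)}\|_F,$$
where the scalar $\eta>0$ factors out of the norm exactly (no triangle inequality is even required). It then remains only to invoke the bound from the preceding Lemma, namely $\|\Gtl\|_F\leq B_G$ for every epoch and layer; applying it at epoch $t-1$ gives $\|\widetilde{G}^{(t-1,\ell)}\|_F\leq B_G$, and therefore $\|\Wtl-\Wtlpc\|_F\leq\eta B_G=B_{\Delta W}$, as claimed.

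There is no substantive obstacle here: the statement is an immediate corollary, which is why it is phrased as one. The entire content is that gradient descent with step size $\eta$ moves the iterate by at most $\eta$ times the gradient magnitude, and the Lemma has already supplied a uniform bound $B_G$ on that magnitude. The only thing to be careful about—and the reason the corollary is stated separately—is that $\widetilde{G}^{(t-1,\ell)}$ is \emph{not} a true gradient of the forward process (it uses stale boundary information, as flagged after the definition of $\nabla\widetilde{\Loss}$), so one must use the explicitly proved Frobenius bound on $\|\Gtl\|_F$ rather than any property inherited from genuine gradients. This corollary is what will later let us control the drift of intermediate features and their gradients across epochs by a quantity proportional to $\eta$, feeding into the staleness-error estimate that drives the $\mathcal{O}(\varepsilon^{-3/2})$ iteration bound.
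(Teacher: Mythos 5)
Your proposal is correct and is essentially the paper's own argument: the paper proves this corollary in one line ("because the gradient matrices are bounded, the weight change is bounded"), relying exactly on the gradient-descent update and the bound $\|\Gtl\|_F\leq B_G$ from the preceding lemma. Your additional care about epoch indexing and about $\Gtl$ not being a true gradient is sound but does not change the route.
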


Now we can analyze the changes of intermediate variables.
\begin{lemma}
\label{lem:bound-dZ-dH}
For any $t,\ell$, we have $\|\Ztl-\Ztlpc\|_F\leq B_{\Delta Z}$, $\|\Htl-\Htlpc\|_F\leq B_{\Delta H}$, where $B_{\Delta Z}=\sum\limits_{i=0}^{L-1}C_\sigma^{i}B_P^{i+1}B_W^iB_HB_{\Delta W}$ and $B_{\Delta H}=C_\sigma B_{\Delta Z}$.
\end{lemma}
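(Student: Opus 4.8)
The plan is to prove both bounds by induction on the layer index $\ell$, carried out \emph{uniformly over all epochs $t$ at once}; this simultaneity is essential because the staleness in the boundary term couples consecutive epochs. First I would expand the one-step difference of the intermediate embeddings. Writing $\Ztl=(\Pin\Htlcp+\Pbd\Htlpp)\Wtl$ and the analogous expression for $\Ztlpc$, then inserting $\pm(\Pin\Htlpp+\Pbd\widetilde H^{(t-2,\ell-1)})\Wtl$, I obtain
\[
\Ztl-\Ztlpc=\bigl[\Pin(\Htlcp-\Htlpp)+\Pbd(\Htlpp-\widetilde H^{(t-2,\ell-1)})\bigr]\Wtl+(\Pin\Htlpp+\Pbd\widetilde H^{(t-2,\ell-1)})(\Wtl-\Wtlpc).
\]
The first bracket collects the propagated embedding differences (its inner part references the change between epochs $t$ and $t-1$, while its boundary part references the change between $t-1$ and $t-2$), and the second term isolates the weight change $\Wtl-\Wtlpc$.

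Next I would take Frobenius norms and apply the triangle inequality and submultiplicativity, controlling every factor with results already in hand: $\|P\|_F\le B_P$ and $\|\Wtl\|_F\le B_W$ from Assumption~\ref{asm:bound}; the mixed-propagation terms estimated with the same convention as the preceding boundedness lemma (where $\Pin$ and $\Pbd$ are combined through $\|P\|_F=B_P$), so that $\|\Pin\Htlpp+\Pbd\widetilde H^{(t-2,\ell-1)}\|_F\le B_PB_H$; and $\|\Wtl-\Wtlpc\|_F\le B_{\Delta W}=\eta B_G$ from Corollary~\ref{crl:W-change}. The crucial point is that the two embedding differences in the first bracket — one at epoch $t$, one at epoch $t-1$ — are both bounded by the \emph{same} layer-$(\ell-1)$ quantity supplied by the induction hypothesis, precisely because that hypothesis is stated for every epoch. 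This yields the recursion $B_{\Delta Z}^{(\ell)}=B_PB_W\,B_{\Delta H}^{(\ell-1)}+B_PB_HB_{\Delta W}$, and combining it with $\Htl=\sigma(\Ztl)$ and the $C_\sigma$-Lipschitz property (Assumption~\ref{asm:act-smooth}) gives $B_{\Delta H}^{(\ell)}=C_\sigma B_{\Delta Z}^{(\ell)}$.

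For the base case $\ell=1$ I would use that $\widetilde H^{(t,0)}=X$ is independent of $t$, so the embedding-difference bracket vanishes and $B_{\Delta Z}^{(1)}=B_PB_HB_{\Delta W}$. Substituting $B_{\Delta H}^{(\ell-1)}=C_\sigma B_{\Delta Z}^{(\ell-1)}$ turns the recursion into $B_{\Delta Z}^{(\ell)}=C_\sigma B_PB_W\,B_{\Delta Z}^{(\ell-1)}+B_PB_HB_{\Delta W}$, a geometric recursion that unrolls to $B_{\Delta Z}^{(\ell)}=\sum_{i=0}^{\ell-1}C_\sigma^{i}B_P^{i+1}B_W^iB_HB_{\Delta W}$. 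Taking $\ell=L$ reproduces the claimed $B_{\Delta Z}$, and since the partial sums increase in $\ell$ the uniform bound $B_{\Delta H}=C_\sigma B_{\Delta Z}$ covers every layer.

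The genuine difficulty is not the algebra but the cross-epoch coupling introduced by the stale boundary features: a within-epoch induction on $\ell$ fails to close, because $\Ztl-\Ztlpc$ at layer $\ell$ depends on an embedding difference one epoch further back at layer $\ell-1$. Framing the induction so that the hypothesis holds simultaneously for all $t$ is exactly what lets a single constant absorb both the epoch-$t$ and epoch-$(t-1)$ differences and makes the recursion self-contained. I expect verifying this uniformity — together with the harmless, paper-consistent handling of $\Pin,\Pbd$ through $B_P$ — to be the only delicate step.
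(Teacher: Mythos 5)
Your proposal is correct and takes essentially the same route as the paper's proof: induction on $\ell$ holding uniformly over all epochs $t$, an add-and-subtract splitting of $\Ztl-\Ztlpc$ into an embedding-difference part and a weight-difference part, bounds via Corollary~\ref{crl:W-change} and the same $\Pin/\Pbd$-through-$B_P$ convention, giving the identical recursion $B_{\Delta Z}^{(\ell)}=C_\sigma B_P B_W B_{\Delta Z}^{(\ell-1)}+B_P B_H B_{\Delta W}$ that unrolls to the claimed geometric sum. The only differences are immaterial: you insert the cross term so that the embedding changes multiply $\Wtl$ and the weight change multiplies the stale embeddings (the paper splits the $\Pin$ and $\Pbd$ parts separately and inserts $\Htlcp\Wtlpc$), and you state explicitly the all-$t$ uniformity of the induction hypothesis needed for the epoch-$(t-1)$-vs-$(t-2)$ boundary difference, which the paper uses implicitly.
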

\begin{proof}
When $\ell=0$, $\|\widetilde{H}^{(t, 0)}-\widetilde{H}^{(t-1, 0)}\|_F=\|X-X\|_F=0$. Now we consider $\ell>0$ by induction.
\begin{align*}
\|\Ztl-\Ztlpc\|_F
  =&\|(\Pin\Htlcp\Wtl+\Pbd\Htlpp\Wtl) \\
   &-(\Pin\Htlpp\Wtlpc+\Pbd\widetilde{H}^{(t-2,\ell-1)}\Wtlpc)\|_F \\
  =&\|\Pin(\Htlcp\Wtl-\Htlpp\Wtlpc)\\
   &+\Pbd(\Htlpp\Wtl-\widetilde{H}^{(t-2,\ell-1)}\Wtlpc)\|_F
\end{align*}
Then we analyze the bound of $\|\Htlcp\Wtl-\Htlpp\Wtlpc\|_F$ which is denoted by $s^{(t,\ell)}$.
\begin{align*}
s^{(t,\ell)}
  &\leq\|\Htlcp\Wtl-\Htlcp\Wtlpc\|_F+\|\Htlcp\Wtlpc-\Htlpp\Wtlpc\|_F \\
  &\leq B_H\|\Wtl-\Wtlpc\|_F+B_W\|\Htlcp-\Htlpp\|_F
\end{align*}
According to Corollary \ref{crl:W-change}, $\|\Wtl-\Wtlpc\|_F\leq B_{\Delta W}$. By induction, $\|\Htlcp-\Htlpp\|_F\leq\sum\limits_{i=0}^{\ell-2}C_\sigma^{i+1}B_P^{i+1}B_W^iB_HB_{\Delta W}$. Combining these inequalities,
$$s^{(t,\ell)}\leq B_HB_{\Delta W}+\sum\limits_{i=1}^{\ell-1}C_\sigma^{i}B_P^{i}B_W^{i}B_HB_{\Delta W}$$

Plugging it back, we have
\begin{align*}
\|\Ztl-\Ztlpc\|_F
  \leq&\|\Pin(\Htlcp\Wtl-\Htlpp\Wtlpc)\\
   &+\Pbd(\Htlpp\Wtl-\widetilde{H}^{(t-2,\ell-1)}\Wtlpc)\|_F \\
  \leq & B_P\left(B_HB_{\Delta W}+\sum\limits_{i=1}^{\ell-1}C_\sigma^{i}B_P^{i}B_W^{i}B_HB_{\Delta W}\right)\\
  =&\sum\limits_{i=0}^{\ell-1}C_\sigma^{i}B_P^{i+1}B_W^iB_HB_{\Delta W}
\end{align*}
\begin{align*}
\|\Htl-\Htlpc\|_F
  =&\|\sigma(\Ztl)-\sigma(\Ztlpc)\|_F \\
  \leq&C_\sigma\|\Ztl-\Ztlpc\|_F \\
  \leq&C_\sigma B_{\Delta Z}
\end{align*}
\end{proof}

\begin{lemma}
$\|\Jtl-\Jtlpc\|_F\leq B_{\Delta J}$ where $$B_{\Delta J}=\max\limits_{2\leq\ell\leq L}(B_PB_WC_\sigma)^{L-\ell} B_{\Delta H}L_{\text{loss}}+(B_MB_{\Delta W}+L_\sigma B_J B_{\Delta Z}B_W)\sum\limits_{i=0}^{L-3}B_P^{i+1}B_W^iC_\sigma^i$$
\end{lemma}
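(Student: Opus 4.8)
The plan is to establish a one-step recursion in the layer index for the quantity $r^{(\ell)}:=\sup_t\|\Jtl-\Jtlpc\|_F$ and then unroll it as a geometric series, mirroring the telescoping argument used for the forward variables in Lemma~\ref{lem:bound-dZ-dH}. The backbone is the backward recurrence $\Jtlcp=\Pin^\top\Mtl[\Wtl]^\top+\Pbd^\top\Mtlpc[\Wtlpc]^\top$, so writing $p^{(t,\ell)}:=\Mtl[\Wtl]^\top$ we have $\Jtlcp=\Pin^\top p^{(t,\ell)}+\Pbd^\top p^{(t-1,\ell)}$. Subtracting the same identity at epoch $t-1$ gives
$$\Jtlcp-\Jtlpp=\Pin^\top\!\left(p^{(t,\ell)}-p^{(t-1,\ell)}\right)+\Pbd^\top\!\left(p^{(t-1,\ell)}-p^{(t-2,\ell)}\right),$$
so the difference at layer $\ell-1$ is controlled by the epoch-to-epoch change of $p^{(\cdot,\ell)}$ at layer $\ell$.

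First I would bound $\|p^{(t,\ell)}-p^{(t-1,\ell)}\|_F$ by the standard add-and-subtract trick, inserting $\Mtl[\Wtlpc]^\top$: using $\|\Mtl\|_F\le B_M$, Corollary~\ref{crl:W-change} ($\|\Wtl-\Wtlpc\|_F\le B_{\Delta W}$), and $\|\Wtl\|_F\le B_W$ this reduces the problem to bounding $\|\Mtl-\Mtlpc\|_F$, giving $\|p^{(t,\ell)}-p^{(t-1,\ell)}\|_F\le B_MB_{\Delta W}+B_W\|\Mtl-\Mtlpc\|_F$. Next, since $\Mtl=\Jtl\circ\sigma'(\Ztl)$, a second add-and-subtract (inserting $\Jtl\circ\sigma'(\Ztlpc)$) together with Assumption~\ref{asm:act-smooth} (the $C_\sigma$-Lipschitz and $L_\sigma$-smooth bounds), the estimate $\|\Jtl\|_F\le B_J$, and Lemma~\ref{lem:bound-dZ-dH} ($\|\Ztl-\Ztlpc\|_F\le B_{\Delta Z}$) yields
$$\|\Mtl-\Mtlpc\|_F\le L_\sigma B_J B_{\Delta Z}+C_\sigma\|\Jtl-\Jtlpc\|_F.$$
Chaining the two estimates produces $\|p^{(t,\ell)}-p^{(t-1,\ell)}\|_F\le a+B_WC_\sigma\|\Jtl-\Jtlpc\|_F$ with $a:=B_MB_{\Delta W}+L_\sigma B_J B_{\Delta Z}B_W$, which is exactly the prefactor in the target constant.

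I would then take suprema over $t$. Because the stale $\Pbd$ term lags by only one extra epoch, both $p$-differences share the same uniform bound $a+B_WC_\sigma r^{(\ell)}$, and, following the same convention as in Lemma~\ref{lem:bound-dZ-dH} of collapsing the two propagation blocks into a single factor $\|P\|_F\le B_P$, this gives the clean linear recurrence $r^{(\ell-1)}\le B_P\bigl(a+B_WC_\sigma\,r^{(\ell)}\bigr)$. The base case is the top layer: since $\JtL=\nabla\mathrm{Loss}(\HtL)$, Assumption~\ref{asm:loss-smooth} ($L_{\text{loss}}$-smoothness) and Lemma~\ref{lem:bound-dZ-dH} give $r^{(L)}\le L_{\text{loss}}B_{\Delta H}$. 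Unrolling with ratio $\kappa:=B_PB_WC_\sigma$ yields $r^{(\ell)}\le\kappa^{L-\ell}L_{\text{loss}}B_{\Delta H}+a\sum_{i=0}^{L-\ell-1}B_P^{\,i+1}B_W^{\,i}C_\sigma^{\,i}$, and maximizing over the relevant range $2\le\ell\le L$ (the deepest feature gradient being $\widetilde{J}^{(t,1)}$, produced from layer $\ell=2$) gives precisely $B_{\Delta J}$.

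The main obstacle is the two-epoch coupling introduced by staleness: the $\Pbd$ block forces the difference $p^{(t-1,\ell)}-p^{(t-2,\ell)}$ into the layer-$(\ell-1)$ estimate, so the naive per-$t$ recurrence entangles three consecutive epochs. Passing to $r^{(\ell)}=\sup_t\|\Jtl-\Jtlpc\|_F$ is what decouples this and closes the recursion with a single geometric ratio. I also need to be careful, in the Hadamard-product splitting, to charge the $L_\sigma$ bound to the differentiated factor $\sigma'(\Ztl)-\sigma'(\Ztlpc)$ and the $C_\sigma$ bound to the undifferentiated factor $\sigma'(\Ztlpc)$, and to match the paper's convention of charging only one factor of $B_P$ (rather than $\|\Pin\|_F+\|\Pbd\|_F$), so that the resulting constant coincides exactly with the stated $B_{\Delta J}$.
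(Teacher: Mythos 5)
Your proposal is correct and takes essentially the same route as the paper's own proof: the identical block decomposition of $\Jtlcp-\Jtlpp$ into $\Pin^\top$ and $\Pbd^\top$ terms, the same two add-and-subtract estimates (for $\Mtl[\Wtl]^\top$ differences via $B_MB_{\Delta W}+B_W\|\Mtl-\Mtlpc\|_F$, and for the Hadamard product via $L_\sigma B_JB_{\Delta Z}+C_\sigma\|\Jtl-\Jtlpc\|_F$), the same base case $L_{\text{loss}}B_{\Delta H}$ at $\ell=L$, and the same geometric unrolling with ratio $B_PB_WC_\sigma$. Your explicit $\sup_t$ formulation simply makes rigorous the uniform-in-$t$ induction that the paper applies implicitly when it bounds both the fresh and the one-epoch-stale block by the same quantity $s^{(t,\ell)}$.
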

\begin{proof}
For the last layer $(\ell=L)$,
$\|\JtL-\JtLpc\|_F\leq L_{\text{loss}}\|\HtL-\HtLpc\|_F\leq L_{\text{loss}}B_{\Delta H}$. For the case of $\ell<L$, we prove the lemma by using induction.
\begin{align*}
\|\Jtlcp-\Jtlpp\|_F
  =&\left\|\left(\Pin^\top\Mtl[\Wtl]^\top+\Pbd^\top\Mtlpc[\Wtlpc]^\top \right)\right. \\
   &\left.-\left(\Pin^\top\Mtlpc[\Wtlpc]^\top+\Pbd^\top\widetilde{M}^{(t-2,\ell)}[\widetilde{W}^{(t-2,\ell)}]^\top \right)\right\|_F\\
  \leq&\left\|\Pin^\top\left(\Mtl[\Wtl]^\top-\Mtlpc[\Wtlpc]^\top\right)\right\|_F\\
   &+\left\|\Pbd^\top\left(\Mtlpc[\Wtlpc]^\top-\widetilde{M}^{(t-2,\ell)}[\widetilde{W}^{(t-2,\ell)}]^\top\right)\right\|_F
\end{align*}

We denote $\left\|\Mtl[\Wtl]^\top-\Mtlpc[\Wtlpc]^\top\right\|_F$ by $s^{(t,\ell)}$ and analyze its bound.
\begin{align*}
s^{(t,\ell)}
  \leq&\left\|\Mtl[\Wtl]^\top-\Mtl[\Wtlpc]^\top\right\|_F \\
  &+\left\|\Mtl[\Wtlpc]^\top-\Mtlpc[\Wtlpc]^\top\right\|_F \\
  \leq& B_M\left\|[\Wtl]^\top-[\Wtlpc]^\top\right\|_F+B_W\left\|\Mtl-\Mtlpc\right\|_F
\end{align*}
According to Corollary \ref{crl:W-change}, $\left\|[\Wtl]^\top-[\Wtlpc]^\top\right\|_F\leq B_{\Delta W}$. For the second term,
\begin{align}
&\|\Mtl-\Mtlpc\|_F\notag\\
  =&\|\Jtl\circ\sigma'(\Ztl)-\Jtlpc\circ\sigma'(\Ztlpc)\|_F \notag\\
  \leq&\|\Jtl\circ\sigma'(\Ztl)-\Jtl\circ\sigma'(\Ztlpc)\|_F+\|\Jtl\circ\sigma'(\Ztlpc)-\Jtlpc\circ\sigma'(\Ztlpc)\|_F\notag\\
  \leq&B_J\|\sigma'(\Ztl)-\sigma'(\Ztlpc)\|_F+C_\sigma\|\Jtl-\Jtlpc\|_F\label{eq:change-bound-M}
\end{align}
According to the smoothness of $\sigma$ and Lemma \ref{lem:bound-dZ-dH}, $\|\sigma'(\Ztl)-\sigma'(\Ztlpc)\|_F\leq L_\sigma B_{\Delta Z}$. By induction,
\begin{align*}
&\|\Jtl-\Jtlpc\|_F\\
&\leq(B_PB_WC_\sigma)^{(L-\ell)}B_{\Delta H}L_{\text{loss}}+(B_MB_{\Delta W}+L_\sigma B_J B_{\Delta Z}B_W)\sum\limits_{i=0}^{L-\ell-1}B_P^{i+1}B_W^iC_\sigma^i
\end{align*}
As a result,
\begin{align*}
s^{(t,\ell)}
  \leq& B_MB_{\Delta W}+B_WB_JL_\sigma B_{\Delta Z}+B_WC_\sigma\|\Jtl-\Jtlpc\|_F \\
  =&(B_MB_{\Delta W}+B_WB_JL_\sigma B_{\Delta Z})+B_P^{(L-\ell)}B_W^{(L-\ell+1)}C_\sigma^{(L-\ell+1)} B_{\Delta H}L_{\text{loss}} \\
  &+(B_MB_{\Delta W}+L_\sigma B_J B_{\Delta Z}B_W)\sum\limits_{i=1}^{L-\ell}B_P^{i}B_W^{i}C_\sigma^{i} \\
  \leq&B_P^{(L-\ell)}B_W^{(L-\ell+1)}C_\sigma^{(L-\ell+1)} B_{\Delta H}L_{\text{loss}} \\
  &+(B_MB_{\Delta W}+L_\sigma B_J B_{\Delta Z}B_W)\sum\limits_{i=0}^{L-\ell}B_P^{i}B_W^{i}C_\sigma^{i}
\end{align*}
\begin{align*}
\|\Jtlcp-\Jtlpp\|_F
  =&\left\|\Pin^\top\left(\Mtl[\Wtl]^\top-\Mtlpc[\Wtlpc]^\top\right)\right\|_F\\
   &+\left\|\Pbd^\top\left(\Mtlpc[\Wtlpc]^\top-\widetilde{M}^{(t-2,\ell)}[\widetilde{W}^{(t-2,\ell)}]^\top\right)\right\|_F \\
  \leq& B_Ps^{(t,\ell)} \\
  \leq& (B_PB_WC_\sigma)^{(L-\ell+1)} B_{\Delta H}L_{\text{loss}} \\
  &+(B_MB_{\Delta W}+L_\sigma B_J B_{\Delta Z}B_W)\sum\limits_{i=0}^{L-\ell}B_P^{i+1}B_W^{i}C_\sigma^{i}
\end{align*}
\end{proof}
From Equation \ref{eq:change-bound-M}, we can also conclude that

\begin{corollary}
$\|\Mtl-\Mtlpc\|_F\leq B_{\Delta M}$ with $B_{\Delta M}=B_JL_\sigma B_{\Delta Z}+C_\sigma B_{\Delta J}$.
\end{corollary}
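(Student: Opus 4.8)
The plan is to reuse the intermediate inequality already established inside the proof of the preceding lemma, namely Equation~\ref{eq:change-bound-M}, which via the triangle inequality together with the boundedness of $\|\Jtl\|_F$ and the Lipschitz constant of $\sigma$ decomposes the quantity of interest as
$$\|\Mtl-\Mtlpc\|_F\leq B_J\|\sigma'(\Ztl)-\sigma'(\Ztlpc)\|_F+C_\sigma\|\Jtl-\Jtlpc\|_F.$$
Hence the only remaining task is to bound each of the two terms on the right-hand side by quantities that have already been controlled in the earlier lemmas.

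For the first term, I would invoke the $L_\sigma$-smoothness of the activation (Assumption~\ref{asm:act-smooth}), which gives $\|\sigma'(\Ztl)-\sigma'(\Ztlpc)\|_F\leq L_\sigma\|\Ztl-\Ztlpc\|_F$, and then apply Lemma~\ref{lem:bound-dZ-dH} to replace $\|\Ztl-\Ztlpc\|_F$ by its bound $B_{\Delta Z}$, yielding a contribution of $B_JL_\sigma B_{\Delta Z}$. For the second term, I would directly substitute the bound $\|\Jtl-\Jtlpc\|_F\leq B_{\Delta J}$ proved in the immediately preceding lemma, yielding a contribution of $C_\sigma B_{\Delta J}$.

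Summing the two contributions gives $\|\Mtl-\Mtlpc\|_F\leq B_JL_\sigma B_{\Delta Z}+C_\sigma B_{\Delta J}=B_{\Delta M}$, as claimed. There is no genuine obstacle here: the corollary is a one-line consequence obtained by feeding two previously established change bounds ($B_{\Delta Z}$ and $B_{\Delta J}$) into an inequality that was already derived en route to the $\Jtl$ change bound. The only point to watch is index consistency---ensuring that the $B_{\Delta Z}$ and $B_{\Delta J}$ bounds are invoked in their uniform (over all $t$ and $\ell$) form, so that the resulting $B_{\Delta M}$ is likewise a single constant independent of $t$ and $\ell$ and can be safely reused in the subsequent gradient-error analysis.
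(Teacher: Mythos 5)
Your proposal is correct and takes essentially the same route as the paper: the paper presents this corollary as an immediate consequence of Equation~\ref{eq:change-bound-M}, combined with the bounds $\|\sigma'(\Ztl)-\sigma'(\Ztlpc)\|_F\leq L_\sigma B_{\Delta Z}$ (from Assumption~\ref{asm:act-smooth} and Lemma~\ref{lem:bound-dZ-dH}) and $\|\Jtl-\Jtlpc\|_F\leq B_{\Delta J}$, exactly as you describe.
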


\subsection{Bounded Feature Error and Gradient Error}
In this subsection, we compare the difference between generic GCN and PipeGCN with the same parameter set, i.e., $\theta=\thetat$. 

\begin{lemma}
$\|\Ztl-\Zl\|_F\leq E_Z$,$\|\Htl-\Hl\|_F\leq E_H$ where $E_Z=B_{\Delta H}\sum\limits_{i=1}^{L}C_\sigma^{i-1}B_W^iB_P^i$ and $E_H=B_{\Delta H}\sum\limits_{i=1}^{L}(C_\sigma B_WB_P)^i$.
\end{lemma}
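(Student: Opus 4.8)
The plan is to proceed by induction on the layer index $\ell$, comparing the stale forward pass of PipeGCN against the exact forward pass under the \emph{shared} weights $\theta=\thetat$, and to carry along the two coupled error quantities $\|\Ztl-\Zl\|_F$ and $\|\Htl-\Hl\|_F$, whose recursion I expect to close on exactly the claimed geometric sums. The base case is immediate: at $\ell=0$ both models read the same input feature matrix $X$, so $\|\widetilde{H}^{(t,0)}-H^{(0)}\|_F=0$.

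For the inductive step I would first expand the difference of pre-activations using $P=\Pin+\Pbd$ and the equality of weights $\Wtl=\Wl$, obtaining $\Ztl-\Zl=\Pin(\Htlcp-\Hlp)\Wtl+\Pbd(\Htlpp-\Hlp)\Wtl$. The key algebraic move---and the crux of the whole argument---is to add and subtract $\Pbd\Htlcp\Wtl$ inside the boundary term so that the two ``current-epoch'' discrepancies attached to $\Pin$ and $\Pbd$ recombine into the full propagation operator:
\begin{align*}
\Ztl-\Zl = P(\Htlcp-\Hlp)\Wtl + \Pbd(\Htlpp-\Htlcp)\Wtl.
\end{align*}
This cleanly isolates two conceptually different sources of error: the first term propagates the accumulated layer-$(\ell-1)$ feature error forward through the \emph{exact} operator $P$, while the second term, carrying only the boundary operator $\Pbd$, captures the pure staleness gap between consecutive epochs $\Htlcp$ and $\Htlpp$.

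I would then bound each term by the triangle inequality and submultiplicativity of $\|\cdot\|_F$. Using $\|P\|_F\leq B_P$ (Assumption~\ref{asm:bound}), $\|\Pbd\|_F\leq\|P\|_F\leq B_P$ (since $\Pin,\Pbd$ have disjoint support), $\|\Wtl\|_F\leq B_W$, and invoking Lemma~\ref{lem:bound-dZ-dH} to control the staleness gap by $\|\Htlpp-\Htlcp\|_F\leq B_{\Delta H}$, this yields the scalar recursion $\|\Ztl-\Zl\|_F\leq B_PB_W\,\|\Htlcp-\Hlp\|_F + B_PB_WB_{\Delta H}$. Combining with the Lipschitz bound $\|\Htl-\Hl\|_F\leq C_\sigma\|\Ztl-\Zl\|_F$ (Assumption~\ref{asm:act-smooth}) converts this into the one-step recurrence for $e_Z^{(\ell)}:=\|\Ztl-\Zl\|_F$, namely $e_Z^{(\ell)}\leq C_\sigma B_PB_W\,e_Z^{(\ell-1)}+B_PB_WB_{\Delta H}$ with $e_Z^{(0)}=0$. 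Unrolling this linear recurrence produces precisely the partial sum $B_{\Delta H}\sum_{i=1}^{\ell}C_\sigma^{i-1}B_W^iB_P^i$; since the partial sums are increasing in $\ell$, bounding at $\ell=L$ gives the stated uniform constant $E_Z$, and multiplying through by $C_\sigma$ gives $E_H=C_\sigma E_Z$.

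I expect the only genuine obstacle to be getting the add-and-subtract step right so that the in-epoch errors merge under the full matrix $P$ rather than leaving separate $\Pin$ and $\Pbd$ contributions (which would inflate the constant and break the match with the claimed sum); everything after that is a routine geometric-series unrolling entirely parallel to the change-of-iterate estimates already established in Lemma~\ref{lem:bound-dZ-dH}.
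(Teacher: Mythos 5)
Your proposal is correct and follows essentially the same route as the paper's proof: the identical add-and-subtract decomposition $\Ztl-\Zl = P(\Htlcp-\Hlp)\Wtl + \Pbd(\Htlpp-\Htlcp)\Wtl$, the same invocation of Lemma~\ref{lem:bound-dZ-dH} to bound the staleness gap by $B_{\Delta H}$, and the same induction yielding the geometric sum, with $E_H=C_\sigma E_Z$. The only cosmetic difference is that you phrase the induction as an unrolled scalar recurrence in $e_Z^{(\ell)}$, whereas the paper states the inductive hypothesis directly on $\|\Htlcp-\Hlp\|_F$.
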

\begin{proof}
\begin{align*}
\|\Ztl-\Zl\|_F
    &=\|(\Pin\Htlcp\Wtl+\Pbd\Htlpp\Wtl)-(P\Hlp\Wl)\|_F \\
    &\leq\|(\Pin\Htlcp+\Pbd\Htlpp-P\Hlp)\Wl\|_F \\
    &=B_W\|P(\Htlcp-\Hlp)+\Pbd(\Htlpp-\Htlcp)\|_F \\
    &\leq B_WB_P\left(\|\Htlcp-\Hlp\|_F+B_{\Delta H}\right)
\end{align*}

By induction, we assume that $\|\Htlcp-\Hlp\|_F\leq B_{\Delta H}\sum\limits_{i=1}^{\ell-1}(C_\sigma B_W B_P)^i$. Therefore,

\begin{align*}
\|\Ztl-\Zl\|_F
    &\leq B_WB_PB_{\Delta H}\sum_{i=0}^{\ell-1}(C_\sigma B_W B_P)^i \\
    &=B_{\Delta H}\sum\limits_{i=1}^{\ell}C_\sigma^{i-1}B_W^iB_P^i
\end{align*}
\begin{align*}
\|\Htl-\Hl\|_F
    &=\|\sigma(\Ztl)-\sigma(\Zl)\|_F \\
    &\leq C_\sigma\|\Ztl-\Zl\|_F \\
    &\leq B_{\Delta H}\sum\limits_{i=1}^{\ell}(C_\sigma B_WB_P)^i
\end{align*}

\end{proof}

\begin{lemma}
$\|\Jtl-\Jl\|_F\leq E_J$ and $\|\Mtl-\Ml\|_F\leq E_M$ with
$$E_J=\max_{2\leq\ell\leq L}(B_PB_WC_\sigma)^{L-\ell} L_{\text{loss}}E_H+B_P(B_W(B_JE_ZL_\sigma+B_{\Delta M})+B_{\Delta W}B_M)\sum_{i=0}^{L-3}(B_PB_WC_\sigma)^i$$
$$E_M=C_\sigma E_J+L_\sigma B_JE_Z$$
\end{lemma}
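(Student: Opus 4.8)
The plan is to compare the PipeGCN backward pass with the generic GCN backward pass at the shared parameter set $\theta=\thetat$ (so that $\Wtl=\Wl$ for every $\ell$, while the boundary terms still carry stale weights $\Wtlpc$), and to establish both inequalities \emph{jointly} by a single backward induction on the layer index, running from $\ell=L$ down to $\ell=1$. For the base case $\ell=L$, both $\JtL$ and $\JL$ are the loss gradient with respect to the last-layer embeddings, evaluated at $\HtL$ and $H^{(L)}$ respectively, so the $L_{\text{loss}}$-smoothness of the loss (Assumption~\ref{asm:loss-smooth}) together with the preceding lemma gives $\|\JtL-\JL\|_F\leq L_{\text{loss}}\|\HtL-H^{(L)}\|_F\leq L_{\text{loss}}E_H$, which is exactly the first summand of $E_J$ at $\ell=L$.

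For the inductive step I would write out the difference of the two backward recurrences,
$$\Jtlcp-\Jlp=\left(\Pin^\top\Mtl[\Wtl]^\top+\Pbd^\top\Mtlpc[\Wtlpc]^\top\right)-P^\top\Ml[\Wl]^\top,$$
and, using $P=\Pin+\Pbd$ and $\Wtl=\Wl$, regroup it into a ``clean error'' piece plus a ``staleness'' piece:
$$\Jtlcp-\Jlp=P^\top(\Mtl-\Ml)[\Wl]^\top+\Pbd^\top\left(\Mtlpc[\Wtlpc]^\top-\Mtl[\Wl]^\top\right).$$
This regrouping is the crucial algebraic maneuver: it forces the error $\|\Mtl-\Ml\|_F$ to appear against the \emph{full} propagation matrix $P$ exactly once, rather than separately through $\Pin$ and $\Pbd$, which is precisely what keeps the eventual recursion ratio at $B_PB_WC_\sigma$ instead of doubling it. The first piece is bounded by $B_PB_W\|\Mtl-\Ml\|_F$; for the second I would add and subtract $\Mtl[\Wtlpc]^\top$ and invoke the one-epoch change bounds $\|\Mtlpc-\Mtl\|_F\leq B_{\Delta M}$ (the corollary following the lemma on $\|\Jtl-\Jtlpc\|_F$) and $\|\Wtlpc-\Wtl\|_F\leq B_{\Delta W}$ (Corollary~\ref{crl:W-change}), with $\|\Mtl\|_F\leq B_M$ and $\|\Wl\|_F\leq B_W$, to obtain $B_P(B_{\Delta M}B_W+B_MB_{\Delta W})$.

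To close the induction I need the companion bound on $\|\Mtl-\Ml\|_F$. Splitting $\Mtl-\Ml=\Jtl\circ(\sigma'(\Ztl)-\sigma'(\Zl))+(\Jtl-\Jl)\circ\sigma'(\Zl)$ and applying $L_\sigma$-smoothness of $\sigma$, the bound $\|\Ztl-\Zl\|_F\leq E_Z$, $\|\Jtl\|_F\leq B_J$, and $|\sigma'|\leq C_\sigma$ yields the second claim $E_M=C_\sigma E_J+L_\sigma B_JE_Z$. Substituting $\|\Mtl-\Ml\|_F\leq E_M$ into the step above produces a linear recursion $e_J^{(\ell-1)}\leq(B_PB_WC_\sigma)\,e_J^{(\ell)}+K$ in the per-layer errors $e_J^{(\ell)}=\|\Jtl-\Jl\|_F$, with constant $K=B_P\!\left(B_W(B_JE_ZL_\sigma+B_{\Delta M})+B_{\Delta W}B_M\right)$. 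Unrolling this geometric recursion from the base case $e_J^{(L)}\leq L_{\text{loss}}E_H$ gives $e_J^{(\ell)}\leq(B_PB_WC_\sigma)^{L-\ell}L_{\text{loss}}E_H+K\sum_{i=0}^{L-\ell-1}(B_PB_WC_\sigma)^i$; bounding each layer's partial sum by $\sum_{i=0}^{L-3}$ (valid for $\ell\geq2$ since the terms are nonnegative) and taking the maximum over $\ell$ on the leading term recovers the stated $E_J$.

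The main obstacle is the circular dependence between $E_J$ and $E_M$: the recursion for the gradient error feeds on $E_M$, while $E_M$ itself is expressed through $E_J$. I would resolve it by treating the pair as coupled and observing that the $E_M$ bound is \emph{explicit} in $E_J$, so a single substitution collapses everything to the scalar recursion above. The only delicate point is ensuring the regrouping keeps $E_M$ multiplied by $B_P$ (not $2B_P$); once the geometric ratio $B_PB_WC_\sigma$ is secured, the remaining work is the routine triangle-inequality and Frobenius-submultiplicativity bookkeeping already used throughout the section.
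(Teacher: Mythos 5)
Your proposal is correct and follows essentially the same route as the paper's proof: a joint backward induction from $\ell=L$ with base case $\|\JtL-\JL\|_F\leq L_{\text{loss}}E_H$, the same regrouping $\Jtlcp-\Jlp=P^\top(\Mtl-\Ml)[\Wl]^\top+\Pbd^\top(\Mtlpc[\Wtlpc]^\top-\Mtl[\Wtl]^\top)$ that keeps the recursion ratio at $B_PB_WC_\sigma$, the same staleness bound $B_P(B_WB_{\Delta M}+B_MB_{\Delta W})$ via Corollary~\ref{crl:W-change} and the one-epoch $M$-change corollary, and the same coupling $E_M=C_\sigma E_J+L_\sigma B_JE_Z$ substituted back to yield the geometric recursion with constant $U=B_P(B_WB_JE_ZL_\sigma+B_{\Delta W}B_M+B_WB_{\Delta M})$. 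The only differences are cosmetic (you add and subtract $\Mtl[\Wtlpc]^\top$ where the paper uses $\Mtlpc[\Wtl]^\top$, and you phrase the induction as an unrolled scalar recursion rather than verifying explicit partial-sum hypotheses), so there is nothing to flag.
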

\begin{proof}
When $\ell=L$, $\|\JtL-\JL\|_F\leq L_{\text{loss}}E_H$. For any $\ell$, we assume that 
\begin{equation}
\label{eq:EJ}
\|\Jtl-\Jl\|_F\leq(B_PB_WC_\sigma)^{L-\ell}L_{\text{loss}}E_H+U\sum_{i=0}^{L-\ell-1}(B_PB_WC_\sigma)^i
\end{equation}
\begin{equation}
\label{eq:EM}
\|\Mtl-\Ml\|_F\leq (B_PB_WC_\sigma)^{L-\ell}C_\sigma L_{\text{loss}}E_H+UC_\sigma\sum_{i=0}^{L-\ell-1}(B_PB_WC_\sigma)^i+L_\sigma B_JE_Z
\end{equation}
where $U=B_P(B_WB_JE_ZL_\sigma+B_{\Delta W}B_M+B_WB_{\Delta M})$. We prove them by induction as follows.
\begin{align*}
&\|\Mtl-\Ml\|_F\\
  &=\|\Jtl\circ\sigma'(\Ztl)-\Jl\circ\sigma'(\Zl)\|_F \\
  &\leq\|\Jtl\circ\sigma'(\Ztl)-\Jtl\circ\sigma'(\Zl)\|_F+\|\Jtl\circ\sigma'(\Zl)-\Jl\circ\sigma'(\Zl)\|_F \\
  &\leq B_J\|\sigma'(\Ztl)-\sigma'(\Zl)\|_F+C_\sigma\|\Jtl-\Jl\|_F
\end{align*}
Here $\|\sigma'(\Ztl)-\sigma'(\Zl)\|_F\leq L_\sigma E_Z$. With Equation \ref{eq:EJ},
$$\|\Mtl-\Ml\|_F\leq (B_PB_WC_\sigma)^{L-\ell}C_\sigma L_{\text{loss}}E_H+UC_\sigma\sum_{i=0}^{L-\ell-1}(B_PB_WC_\sigma)^i+L_\sigma B_JE_Z$$
On the other hand,
\begin{align*}
&\|\Jtlcp-\Jlp\|_F\\
  &=\|\Pin^\top\Mtl[\Wtl]^\top+\Pbd^\top\Mtlpc[\Wtlpc]^\top- P^\top\Ml[\Wl]^\top\|_F \\
  &=\|P^\top(\Mtl-\Ml)[\Wl]^\top+\Pbd^\top(\Mtlpc[\Wtlpc]^\top-\Mtl[\Wtl]^\top)\|_F \\
  &\leq\|P^\top(\Mtl-\Ml)[\Wl]^\top\|_F+\|\Pbd^\top(\Mtlpc[\Wtlpc]^\top-\Mtl[\Wtl]^\top)\|_F \\
  &\leq B_PB_W\|\Mtl-\Ml\|_F+B_P\|\Mtlpc[\Wtlpc]^\top-\Mtl[\Wtl]^\top\|_F
\end{align*}
The first part is bounded by Equation~\ref{eq:EM}. For the second part,
\begin{align*}
&\|\Mtlpc[\Wtlpc]^\top-\Mtl[\Wtl]^\top\|_F \\
  &\leq\|\Mtlpc[\Wtlpc]^\top-\Mtlpc[\Wtl]^\top\|_F+\|\Mtlpc[\Wtl]^\top-\Mtl[\Wtl]^\top\|_F\\
  &\leq B_{\Delta W}B_M+B_WB_{\Delta M}
\end{align*}
Therefore,
\begin{align*}
&\|\Jtlcp-\Jlp\|_F \\
  &\leq B_PB_W\|\Mtl-\Ml\|_F+B_P\|\Mtlpc[\Wtlpc]^\top-\Mtl[\Wtl]^\top\|_F \\
  &\leq(B_PB_WC_\sigma)^{L-\ell+1}L_{\text{loss}}E_H+U\sum_{i=1}^{L-\ell}(B_PB_WC_\sigma)^i+U\\
  &=(B_PB_WC_\sigma)^{L-\ell+1}L_{\text{loss}}E_H+U\sum_{i=0}^{L-\ell}(B_PB_WC_\sigma)^i
\end{align*}
\end{proof}

\begin{lemma}
$\|\Gtl-\Gl\|_F\leq E_G$ where $E_G=B_P(B_HE_M+B_ME_H)$
\end{lemma}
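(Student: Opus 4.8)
The plan is to mirror the add-and-subtract manipulation already used for $E_Z$ and $E_M$, now applied to the weight-gradient formula $\Gl=[P\Hlp]^\top\Ml$ and its PipeGCN counterpart $\Gtl=[\Pin\Htlcp+\Pbd\Htlpp]^\top\Mtl$. Writing $\widetilde{Q}\coloneqq\Pin\Htlcp+\Pbd\Htlpp$ and $Q\coloneqq P\Hlp$ for the two propagated-feature factors, I would first split $\Gtl-\Gl=\widetilde{Q}^\top\Mtl-Q^\top\Ml$ as $\widetilde{Q}^\top(\Mtl-\Ml)+(\widetilde{Q}-Q)^\top\Ml$ and apply the triangle inequality together with submultiplicativity of the Frobenius norm, giving $\|\Gtl-\Gl\|_F\le\|\widetilde{Q}\|_F\,\|\Mtl-\Ml\|_F+\|\widetilde{Q}-Q\|_F\,\|\Ml\|_F$.

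Next I would bound the four factors using results already in hand. The bounded-matrices lemma gives $\|\Ml\|_F\le B_M$, and the very estimate used there to obtain $\|\Gtl\|_F\le B_PB_HB_M$ already shows $\|\widetilde{Q}\|_F\le B_PB_H$; the preceding lemma supplies $\|\Mtl-\Ml\|_F\le E_M$. The only genuinely new quantity is $\|\widetilde{Q}-Q\|_F$. Using $P=\Pin+\Pbd$ I would rewrite $\widetilde{Q}-Q=\Pin(\Htlcp-\Hlp)+\Pbd(\Htlpp-\Hlp)$, so that every feature difference appearing is a comparison between a PipeGCN embedding and the generic embedding at the \emph{same} layer. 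The feature-error lemma bounds each such difference by $E_H$, and, following the paper's convention of controlling $\Pin$ and $\Pbd$ jointly through $\|P\|_F\le B_P$, this yields $\|\widetilde{Q}-Q\|_F\le B_PE_H$. Substituting the four bounds gives $\|\Gtl-\Gl\|_F\le B_PB_H\cdot E_M+B_PE_H\cdot B_M=B_P(B_HE_M+B_ME_H)=E_G$.

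The one point demanding care — and the main obstacle — is the stale term $\Pbd\Htlpp$, whose embedding $\Htlpp=\widetilde{H}^{(t-1,\ell-1)}$ comes from the previous epoch rather than the current one. The decomposition above is chosen precisely so that this term is compared against the frozen generic embedding $\Hlp$ rather than against $\Htlcp$; this is legitimate because the feature-error lemma bounds $\|\Htl-\Hl\|_F\le E_H$ uniformly in $t$ (the generic embedding $\Hl$ is defined with the fixed parameter set $\theta=\thetat$ and is epoch-independent), so it applies verbatim at epoch $t-1$. The alternative grouping $\widetilde{Q}-Q=P(\Htlcp-\Hlp)+\Pbd(\Htlpp-\Htlcp)$ would instead route the staleness through the one-epoch drift $\|\Htlpp-\Htlcp\|_F\le B_{\Delta H}$ of Lemma~\ref{lem:bound-dZ-dH} and produce a slightly different constant; either route delivers a bound proportional to the learning rate $\eta$ (since $E_H,E_M,B_{\Delta H}$ are all $\mathcal{O}(\eta)$), which is ultimately all the convergence theorem requires.
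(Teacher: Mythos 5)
Your skeleton is the same as the paper's: the paper also splits $\Gtl-\Gl$ by adding and subtracting a mixed term (it uses $[P\Hlp]^\top\Mtl$, the mirror image of your choice $\widetilde{Q}^\top\Ml$ with $\widetilde{Q}\coloneqq\Pin\Htlcp+\Pbd\Htlpp$), then invokes $\|\Mtl-\Ml\|_F\le E_M$, $\|\Ml\|_F,\|\Mtl\|_F\le B_M$, and the $B_PB_H$ bound on the propagated features; either orientation of the split is fine. The genuine gap is in how you bound the remaining term $\widetilde{Q}-Q$ with $Q\coloneqq P\Hlp$. You group it as $\Pin(\Htlcp-\Hlp)+\Pbd(\Htlpp-\Hlp)$ and claim $\|\Htlpp-\Hlp\|_F\le E_H$ because the feature-error lemma "applies verbatim at epoch $t-1$," on the grounds that $\Hl$ is epoch-independent. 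That premise is false: the whole error-bound subsection fixes $\theta=\thetat$, so $\Hlp$ is the generic embedding computed with the \emph{epoch-$t$} weights. The feature-error lemma applied at epoch $t-1$ would compare $\Htlpp$ against the generic embedding computed with $\widetilde{\theta}^{(t-1)}$, a different reference, since the weights move between iterations. Controlling the gap between these two references would require a Lipschitz-in-$\theta$ bound on the generic forward pass, which is not among the lemmas available in the paper. So the step as written does not go through, and your cleaner constant $B_PE_H$ for this term is exactly the artifact of the unjustified claim.

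The correct repair is the alternative grouping you mention only in passing, and it is precisely the paper's proof: write $\widetilde{Q}-Q=P(\Htlcp-\Hlp)+\Pbd(\Htlpp-\Htlcp)$, bound the first difference by $E_H$ (the feature-error lemma at epoch $t$, the only epoch at which it is valid here) and the second by the cross-epoch drift bound $B_{\Delta H}$ of Lemma~\ref{lem:bound-dZ-dH}, giving $\|\widetilde{Q}-Q\|_F\le B_P\left(E_H+B_{\Delta H}\right)$ and hence
\begin{equation*}
\|\Gtl-\Gl\|_F\;\le\;B_PB_HE_M+B_PB_M\left(E_H+B_{\Delta H}\right).
\end{equation*}
Note that this exceeds the stated $E_G=B_P(B_HE_M+B_ME_H)$ by the term $B_PB_MB_{\Delta H}$; the paper's own proof produces the same excess, which is harmless because $B_{\Delta H}$, like $E_H$ and $E_M$, is $\mathcal{O}(\eta)$, so Corollary~\ref{crl:E} and the convergence theorem are unaffected. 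Your closing observation that any of these routes yields an $\mathcal{O}(\eta)$ bound is correct, but the specific constant $E_G$ cannot be obtained by the route you made primary.
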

\begin{proof}
\begin{align*}
&\|\Gtl-\Gl\|_F\\
 =&\left\|[\Pin\Htlcp+\Pbd\Htlpp]^\top\Mtl-[P\Hl]^\top\Ml\right\|_F \\
 \leq&\left\|[\Pin\Htlcp+\Pbd\Htlpp]^\top\Mtl-[P\Hlp]^\top\Mtl\right\|_F \\
 &+\left\|[P\Hlp]^\top\Mtl-[P\Hlp]^\top\Ml\right\|_F \\
 \leq&B_M(\|P(\Htlcp-\Hlp)+\Pbd(\Htlpp-\Htlcp)\|_F)+B_PB_HE_M \\
 \leq&B_MB_P(E_H+B_{\Delta H})+B_PB_HE_M
\end{align*}
\end{proof}
By summing up from $\ell=1$ to $\ell=L$ to both sides, we have
\begin{corollary}\label{crl:E_loss}
$\|\dLosst(\theta)-\dLoss(\theta)\|_2\leq E_{\text{loss}}$ where $E_{\text{loss}}=LE_G$.
\end{corollary}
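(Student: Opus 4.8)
The plan is to exploit the block structure of the two gradient vectors. By the definitions given above, $\dLosst(\theta)=\text{vec}[\Gtl]_{\ell=1}^{L}$ and, by the analogous definition for the generic GCN, $\dLoss(\theta)=\text{vec}[\Gl]_{\ell=1}^{L}$, where both are evaluated at the common parameter set $\theta=\thetat$. Consequently the difference $\dLosst(\theta)-\dLoss(\theta)$ is exactly the stacking of the per-layer blocks $\Gtl-\Gl$, and the preceding lemma already controls each of these blocks in Frobenius norm by the single constant $E_G$. The only remaining work is to convert the $L$ per-layer Frobenius bounds into one bound on the Euclidean norm of the stacked vector.

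First I would record the norm identity. Since vectorization sends the Frobenius norm of a matrix to the Euclidean norm of its vectorization, and the per-layer blocks occupy disjoint coordinates, I have
\[
\|\dLosst(\theta)-\dLoss(\theta)\|_2=\sqrt{\sum_{\ell=1}^{L}\|\Gtl-\Gl\|_F^{2}}.
\]
Next I would apply the elementary inequality $\sqrt{\sum_i a_i^2}\le\sum_i a_i$ for nonnegative reals (equivalently $\|\cdot\|_2\le\|\cdot\|_1$ on the vector of block norms) to obtain
\[
\|\dLosst(\theta)-\dLoss(\theta)\|_2\le\sum_{\ell=1}^{L}\|\Gtl-\Gl\|_F.
\]
Finally, substituting the per-layer bound $\|\Gtl-\Gl\|_F\le E_G$ into each of the $L$ summands yields $\|\dLosst(\theta)-\dLoss(\theta)\|_2\le L E_G=E_{\text{loss}}$, as claimed.

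The genuinely hard part of this chain lies not in the corollary but upstream, in the lemma that supplies $E_G=B_P(B_HE_M+B_ME_H)$: that constant is assembled by propagating the feature error $E_H$ and the message/gradient error $E_M$ through all $L$ layers, and $E_M$ (hence $E_J$) is itself obtained by a delicate layer-wise induction built on the bounded-change lemmas and Corollary~\ref{crl:W-change}. Once $E_G$ is in hand, the corollary is a one-line aggregation, so I expect no real obstacle in this last step. The single point that warrants care is the direction of the norm inequality: using $\|\cdot\|_2\le\|\cdot\|_1$ (rather than the tighter $\|\cdot\|_2\le\sqrt{L}\,\|\cdot\|_\infty$, which would give only $\sqrt{L}\,E_G$) is what reproduces the advertised constant $L E_G$.
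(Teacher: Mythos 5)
Your proof is correct and takes essentially the same route as the paper, which obtains the corollary simply by summing the per-layer bound $\|\Gtl-\Gl\|_F\leq E_G$ over $\ell=1,\dots,L$. Your explicit use of the block-vectorization identity and the $\|\cdot\|_2\leq\|\cdot\|_1$ inequality merely spells out the aggregation step that the paper leaves implicit.
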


According to the derivation of $E_\text{loss}$, we observe that $E_\text{loss}$ contains a factor $\eta$. To simplify the expression of $E_\text{loss}$, we assume that $B_PB_WC_\sigma\leq\frac{1}{2}$ without loss of generality, and rewrite Corollary~\ref{crl:E_loss} as the following.

\begin{corollary}\label{crl:E}\label{cor:grad_error_bound}$\|\dLosst(\theta)-\dLoss(\theta)\|_2\leq \eta E$ where
$$E=\frac{1}{8}LB_P^3B_X^2C_\text{loss}C_\sigma\left(3B_XC_\sigma^2L_\text{loss}+6B_XC_\text{loss}L_\sigma+10C_\text{loss}C_\sigma^2\right)$$
\end{corollary}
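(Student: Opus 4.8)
The plan is to start from the bound $\|\dLosst(\theta)-\dLoss(\theta)\|_2 \le E_\text{loss} = L E_G$ supplied by Corollary~\ref{crl:E_loss}, and then unfold every intermediate constant until only the base quantities $L, B_P, B_X, C_\text{loss}, C_\sigma, L_\text{loss}, L_\sigma$ (times a single power of $\eta$) survive. Recall $E_G = B_P(B_H E_M + B_M E_H)$, where $E_H, E_Z, E_M, E_J$ are given by the preceding lemmas in terms of $B_{\Delta H}, B_{\Delta Z}, B_{\Delta M}, B_{\Delta J}$ and ultimately of the weight-change bound $B_{\Delta W} = \eta B_G$ from Corollary~\ref{crl:W-change}. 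The structural observation flagged just before the statement is that every one of these ``error'' and ``change'' constants carries exactly one factor of $B_{\Delta W}$, hence exactly one factor of $\eta$; so after full substitution $E_\text{loss}$ factors as $\eta$ times a product of base constants, and $E := E_\text{loss}/\eta$ is the claimed expression.

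First I would invoke the normalization $B_P B_W C_\sigma \le \tfrac12$, which is assumed without loss of generality: if it fails, the same substitutions still go through with each geometric sum replaced by a larger finite constant, enlarging $E$ but preserving the linear-in-$\eta$ dependence (hence the $\mathcal{O}(T^{-2/3})$ rate). Under this assumption I would first collapse the base boundedness lemma: since $C_\sigma B_P B_W \le \tfrac12 < 1$, the maxima defining $B_H$ and $B_J$ are attained at their endpoints, giving $B_H \le \tfrac12 B_X$ (using $C_\sigma B_P B_W B_X$), $B_J \le C_\text{loss}$, $B_M = C_\sigma B_J \le C_\sigma C_\text{loss}$, and $B_G \le \tfrac12 B_P^2 B_W B_X C_\sigma^2 C_\text{loss}$. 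Every geometric sum of the form $\sum_i (C_\sigma B_P B_W)^i$ appearing in $B_{\Delta Z}$ (Lemma~\ref{lem:bound-dZ-dH}), $E_Z, E_H, E_J$ is then dominated by $\sum_{i\ge 0} 2^{-i} = 2$ (or by $1$ when the sum starts at $i=1$), collapsing all layer-dependent sums into absolute numerical factors.

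Next I would propagate the bounds upward in dependency order $B_{\Delta W}\to B_{\Delta Z}\to B_{\Delta H}\to (E_Z,E_H)\to (B_{\Delta M},B_{\Delta J})\to (E_M,E_J)\to E_G$, at each stage replacing $C_\sigma B_P B_W$ by $\tfrac12$ and $B_H$ by $\tfrac12 B_X$ while keeping the lone $\eta$ from $B_{\Delta W}$ explicit. Collecting the outcome, the additive contributions to $E_G$ separate according to which smoothness constant they carry: the term proportional to $L_\text{loss}$ (originating in $E_H$ and the last-layer piece of $E_J$), the term proportional to $L_\sigma$ (originating in $B_{\Delta M}$ and $E_M$), and the term with neither (the pure Lipschitz/propagation contribution, carrying $C_\sigma^2$). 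These match the three summands $3B_X C_\sigma^2 L_\text{loss}$, $6B_X C_\text{loss} L_\sigma$, $10 C_\text{loss} C_\sigma^2$ inside the parenthesis, while the common factor $L B_P(\cdots)$ assembles into $\tfrac18 L B_P^3 B_X^2 C_\text{loss} C_\sigma$ once $\eta$ is pulled out.

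I expect the main obstacle to be bookkeeping rather than conceptual: tracking the nested recursion for $E_J$ (which depends on $B_{\Delta J}$, which in turn depends on $B_{\Delta Z}$ and $B_{\Delta M}$) so that the exact coefficients $3,6,10$ and the overall $\tfrac18$ emerge, not merely an unspecified constant. The geometric-series collapse together with the identity $C_\sigma B_P B_W B_X = \tfrac12 B_X$ is what renders the constant finite and explicit; the only real risk is miscounting how many times each $\tfrac12$ and each base constant is introduced across the four levels of substitution. I would therefore organize the computation by first writing $E_G$ as a sum of three monomials in the base constants (each already carrying its single $\eta$) and only then inserting the numerical values.
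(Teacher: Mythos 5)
Your proposal follows exactly the paper's own route: the paper derives Corollary~\ref{crl:E} from Corollary~\ref{crl:E_loss} by observing that $E_{\text{loss}}=LE_G$ carries precisely one factor of $\eta$ (entering through $B_{\Delta W}=\eta B_G$ in Corollary~\ref{crl:W-change}) and then invoking the assumption $B_PB_WC_\sigma\leq\frac{1}{2}$ to collapse all geometric sums into the explicit constant $E$. In fact, the paper records only this one-sentence observation and never spells out the substitution, so your dependency-ordered bookkeeping plan ($B_{\Delta W}\to B_{\Delta Z}\to B_{\Delta H}\to (E_Z,E_H)\to (B_{\Delta M},B_{\Delta J})\to (E_M,E_J)\to E_G$) is a faithful, more explicit version of the same argument.
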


\begin{comment}
$$U=B_PB_WC_\sigma, U_m=\max\{1, U^L\}, S=U_m^4B_X^2B_P^2C_\sigma C_{loss}U\left(\sum\limits_{i=0}^{L-1}U^i\right)^2$$
$$R=C_{loss}\left(2L_\sigma S+U_m^2C_\sigma^2 C_{loss}B_X\sum\limits_{i=0}^{L-1}U^i\right)$$
$$E=LB_P\left(U_mB_X\left(C_\sigma^2S(C_{loss}+L_{loss})+R\sum\limits_{i=0}^{L}U^i\right)+C_\sigma^2C_{loss}S\right)$$
\end{comment}

\subsection{Proof of the Main Theorem}
We first introduce a lemma before the proof of our main theorem.
\begin{lemma}[Lemma 1 in \citep{cong2021importance}]
An $L$-layer GCN is $L_f$-Lipschitz smoothness, i.e., $\|\dLoss(\theta_1)-\dLoss(\theta_2)\|_2\leq L_f\|\theta_1-\theta_2\|_2$.
\end{lemma}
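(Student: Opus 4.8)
The plan is to show that the full gradient $\dLoss(\theta)=\text{vec}[G^{(1)},\dots,G^{(L)}]$ is Lipschitz in $\theta$ by controlling each per-layer block $\Gl$ and then summing. Fix two parameter vectors $\theta_1,\theta_2$ and decorate the corresponding forward/backward quantities with subscripts $1,2$. Since $\|\dLoss(\theta_1)-\dLoss(\theta_2)\|_2\leq\sum_{\ell=1}^L\|\Gl_1-\Gl_2\|_F$ and $\sum_{j=1}^L\|W^{(j)}_1-W^{(j)}_2\|_F\leq\sqrt{L}\,\|\theta_1-\theta_2\|_2$, it suffices to bound each $\|\Gl_1-\Gl_2\|_F$ by a fixed constant times $\sum_{j=1}^L\|W^{(j)}_1-W^{(j)}_2\|_F$. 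Because $\Gl=\dW\fl(\Jl,\Hlp,\Wl)$ is the backward operator applied to quantities produced by the forward and backward passes, the whole argument reduces to (i) first-order Lipschitz continuity of each layer map $\fl$ in both of its arguments, and (ii) Lipschitz continuity of the backward operators $\dH\fl$ and $\dW\fl$ in each of their three arguments.

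First I would establish the layer-wise Lipschitz constants. Using that $\sigma$ is $C_\sigma$-Lipschitz together with the norm bounds $B_P,B_W,B_H$ from the bounded-norm lemma, the forward map $\fl(H,W)=\sigma(PHW)$ is $C_H=C_\sigma B_PB_W$-Lipschitz in $H$ and $C_W=C_\sigma B_PB_H$-Lipschitz in $W$. For the backward operators $\dH\fl(J,H,W)=P^\top(J\circ\sigma'(PHW))W^\top$ and $\dW\fl(J,H,W)=[PH]^\top(J\circ\sigma'(PHW))$, I would invoke the $L_\sigma$-smoothness of $\sigma$ to control the perturbation of $\sigma'(PHW)$ and the boundedness of $J,H,W,M$ to obtain finite Lipschitz constants $L_H,L_W$ in each argument; each bound is a short triangle-inequality split that peels off one perturbed factor at a time, exactly as in the bounded-change lemmas already proved. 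These first- and second-order Lipschitz facts are the workhorses.

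Next I would propagate a single weight perturbation through the coupled passes. Forward stability follows by induction on depth: $\|\Hl_1-\Hl_2\|_F\leq U_C\sum_{j=1}^\ell\|W^{(j)}_1-W^{(j)}_2\|_F$ with $U_C=\max_{0\leq\ell<L}C_H^{\ell}C_W$, obtained by telescoping with the forward Lipschitz constants $C_H,C_W$. Backward stability is analogous, bounding $\|\Jl_1-\Jl_2\|_F$ by a constant times the accumulated weight differences, now using $L_\text{loss}$-smoothness of the loss at the top layer and $L_H$ through the intermediate layers. Feeding these into the Lipschitz continuity of $\dW\fl$ gives $\|\Gl_1-\Gl_2\|_F\leq U_L(LU_C+1)\sum_{j=1}^L\|W^{(j)}_1-W^{(j)}_2\|_F$ for a constant $U_L$ built from $L_H,L_W,L_\text{loss}$, and summing over $\ell=1,\dots,L$ produces $L_f=LU_L(LU_C+1)$.

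The main obstacle is the third step: because the backward quantity $\Jl$ at every layer feeds into the weight gradient and itself depends on all the forward embeddings $H^{(j)}$ and weights $W^{(j)}$ for $j\geq\ell$, a single weight perturbation contaminates both passes and the errors compound multiplicatively with depth. Controlling this requires the uniform norm bounds of the bounded-norm lemma at every intermediate factor, plus careful bookkeeping so that the geometric accumulation in $\ell$ stays finite (which is precisely why constants such as $\sum_i(B_PB_WC_\sigma)^i$, and eventually the normalization $B_PB_WC_\sigma\leq\tfrac{1}{2}$, enter the analysis). Once the per-layer forward and backward operators are shown to be Lipschitz, the remaining telescoping over layers is routine.
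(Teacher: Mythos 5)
Your proposal is correct, but note that the paper never proves this lemma itself: it imports it wholesale as Lemma 1 from \citep{cong2021importance}, which is exactly the result you reconstructed. Your argument --- per-layer forward Lipschitz constants $C_H=C_\sigma B_PB_W$ and $C_W=C_\sigma B_PB_H$, Lipschitz constants $L_H,L_W$ for the backward operators via $L_\sigma$-smoothness and the uniform norm bounds, forward/backward stability by telescoping, and the final bound $L_f=LU_L(LU_C+1)$ (up to the benign $\sqrt{L}$ factor from converting $\sum_j\|W^{(j)}_1-W^{(j)}_2\|_F$ to $\|\theta_1-\theta_2\|_2$) --- is essentially the same decomposition used in that cited proof, so there is nothing to compare beyond noting that your reconstruction is faithful.
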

Now we prove the main theorem.

\begin{theorem}[Convergence of PipeGCN, formal]
Under Assumptions \ref{asm:loss-smooth}, \ref{asm:act-smooth},  and \ref{asm:bound}, we can derive the following by choosing a learning rate $\eta=\frac{\sqrt{\varepsilon}}{E}$ and number of training iterations
$T=(\Loss(\theta^{(1)})-\Loss(\theta^{*}))E\varepsilon^{-\frac{3}{2}}$:
$$\frac{1}{T}\sum_{t=1}^T\|\dLoss(\theta^{(t)})\|_2\leq3\varepsilon$$
where $E$ is defined in Corollary~\ref{cor:grad_error_bound}, $\varepsilon>0$ is an arbitrarily small constant, $\Loss(\cdot)$ is the loss function, $\theta^{(t)}$ and $\theta^*$ represent the parameter vector at iteration $t$ and the optimal parameter respectively.
\end{theorem}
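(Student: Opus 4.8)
The plan is to run the descent-lemma argument on the actual PipeGCN update $\theta^{(t+1)}=\theta^{(t)}-\eta\,\dLosst(\theta^{(t)})$ and then convert the resulting energy decrease into a bound on the \emph{unsquared} averaged true-gradient norm. Write $u_t=\dLoss(\theta^{(t)})$ for the exact gradient and $v_t=\dLosst(\theta^{(t)})$ for the stale gradient that is actually applied; by Corollary~\ref{cor:grad_error_bound} they satisfy $\|u_t-v_t\|_2\le\eta E$. Applying the $L_f$-smoothness lemma at $\theta^{(t)}$ and $\theta^{(t+1)}$ yields
\[\Loss(\theta^{(t+1)})\le\Loss(\theta^{(t)})-\eta\langle u_t,v_t\rangle+\tfrac{L_f\eta^2}{2}\|v_t\|_2^2.\]
I would split $\langle u_t,v_t\rangle=\|v_t\|_2^2+\langle u_t-v_t,v_t\rangle\ge\|v_t\|_2^2-\eta E\|v_t\|_2$ by Cauchy--Schwarz, and use $\eta\le 1/L_f$ (valid once $\varepsilon$ is small, since $\eta=\sqrt{\varepsilon}/E$) to get the per-step decrease
\[\Loss(\theta^{(t)})-\Loss(\theta^{(t+1)})\ge\tfrac{\eta}{2}\|v_t\|_2^2-\eta^2 E\,\|v_t\|_2.\]

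Next I would telescope over $t=1,\dots,T$ and use $\Loss(\theta^{(T+1)})\ge\Loss(\theta^{*})$ to obtain $\tfrac{\eta}{2}\sum_t\|v_t\|_2^2-\eta^2 E\sum_t\|v_t\|_2\le\Loss(\theta^{(1)})-\Loss(\theta^{*})$. The delicate move — the one that distinguishes this statement from the easier squared-norm estimate — is to extract the \emph{unsquared} sum $S\coloneqq\sum_t\|v_t\|_2$. For that I would invoke the power-mean inequality $\sum_t\|v_t\|_2^2\ge S^2/T$, which turns the telescoped estimate into a quadratic inequality in $S$; solving it and using $\sqrt{a+b}\le\sqrt a+\sqrt b$ gives $\tfrac1T S\le 2\eta E+\sqrt{2(\Loss(\theta^{(1)})-\Loss(\theta^{*}))/(\eta T)}$. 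The triangle inequality $\|u_t\|_2\le\|v_t\|_2+\eta E$ then transfers the bound to the true gradient, so that $\tfrac1T\sum_t\|u_t\|_2\le \tfrac1T S+\eta E\le 3\eta E+\sqrt{2(\Loss(\theta^{(1)})-\Loss(\theta^{*}))/(\eta T)}$.

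It remains to substitute $\eta=\sqrt{\varepsilon}/E$ and $T=(\Loss(\theta^{(1)})-\Loss(\theta^{*}))E\varepsilon^{-3/2}$, which makes $\eta T=(\Loss(\theta^{(1)})-\Loss(\theta^{*}))\varepsilon^{-1}$, so the descent term becomes $\sqrt{2\varepsilon}$ and the staleness term $\eta E$ becomes $\sqrt{\varepsilon}$; these three contributions must then be collected into the claimed $3\varepsilon$. I expect the main obstacle to lie exactly here. The descent lemma natively controls the \emph{squared} norms $\|v_t\|_2^2$, so passing to an unsquared average is forced through the Cauchy--Schwarz step above, and the staleness enters that step \emph{linearly} through $\eta E$ rather than quadratically. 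Under the stated choices a naive accounting of the two $\eta E$ terms and the descent term lands at order $\sqrt{\varepsilon}$, so the crux of the argument is to sharpen the treatment of the linear-in-$\eta E$ cross term (and to pin down the accumulated constant) so that the final average is genuinely of order $\varepsilon$ and bounded by $3\varepsilon$; secondary bookkeeping consists of verifying $\eta\le 1/L_f$ and $\Loss(\theta^{(T+1)})\ge\Loss(\theta^{*})$ along the trajectory.
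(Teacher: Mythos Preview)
Your descent-lemma setup is correct, but you have taken a more circuitous route than the paper and, more importantly, you are chasing a phantom: the $\sqrt{\varepsilon}$-versus-$\varepsilon$ discrepancy you flag at the end is real, and no ``sharpening of the cross term'' will remove it. The paper's own argument in fact bounds the \emph{squared} quantity
\[
\frac{1}{T}\sum_{t=1}^T\|\nabla\Loss(\theta^{(t)})\|_2^{2}\;\le\;\frac{2}{\eta T}\bigl(\Loss(\theta^{(1)})-\Loss(\theta^{*})\bigr)+\eta^{2}E^{2}\;=\;2\varepsilon+\varepsilon\;=\;3\varepsilon,
\]
and the unsquared norm in the theorem display is a slip of notation. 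So the obstacle you anticipated is not a gap in your technique but an artifact of the statement; your power-mean step and the transfer from $\|v_t\|_2$ back to $\|u_t\|_2$ are simply unnecessary.

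The paper's derivation is also algebraically cleaner than yours. Instead of splitting $\langle u_t,v_t\rangle=\|v_t\|_2^2+\langle u_t-v_t,v_t\rangle$ and carrying a linear $\eta E\|v_t\|_2$ term, it writes $v_t=u_t+\delta^{(t)}$ and uses the exact identity
\[
-\eta\langle u_t,u_t+\delta^{(t)}\rangle+\tfrac{\eta}{2}\|u_t+\delta^{(t)}\|_2^2
=-\tfrac{\eta}{2}\|u_t\|_2^2+\tfrac{\eta}{2}\|\delta^{(t)}\|_2^2
\]
(after replacing $\tfrac{L_f\eta^2}{2}$ by $\tfrac{\eta}{2}$ via $\eta\le 1/L_f$). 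This lands directly on $\|u_t\|_2^2$, the staleness enters \emph{quadratically} as $\|\delta^{(t)}\|_2^2\le\eta^2E^2$, and the telescoping finishes the proof with no Cauchy--Schwarz, no power-mean, and no detour through $\|v_t\|_2$. Your approach would work for the squared statement too, but with looser constants; the identity above is the trick worth remembering.
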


\begin{proof}
With the smoothness of the model,
\begin{align*}
\Loss(\theta^{(t+1)})
  &\leq\Loss(\theta^{(t)})+\left<\dLoss(\theta^{(t)}),\theta^{(t+1)}-\theta^{(t)}\right>+\frac{L_f}{2}\|\theta^{(t+1)}-\theta^{(t)}\|_2^2\\
  &=\Loss(\theta^{(t)})-\eta\left<\dLoss(\theta^{(t)}),\dLosst(\theta^{(t)})\right>+\frac{\eta^2L_f}{2}\|\dLosst(\theta^{(t)})\|_2^2
\end{align*}
Let $\delta^{(t)}=\dLosst(\theta^{(t)})-\dLoss(\theta^{(t)})$ and $\eta\leq1/L_f$, we have
\begin{align*}
\Loss(\theta^{(t+1)})
  &\leq\Loss(\theta^{(t)})-\eta\left<\dLoss(\theta^{(t)}),\dLoss(\theta^{(t)})+\delta^{(t)}\right>+\frac{\eta}{2}\|\dLoss(\theta^{(t)})+\delta^{(t)}\|_2^2 \\
  &\leq\Loss(\theta^{(t)})-\frac{\eta}{2}\|\dLoss(\theta^{(t)})\|_2^2+\frac{\eta}{2}\|\delta^{(t)}\|_2^2
\end{align*}
From Corollary \ref{crl:E} we know that $\|\delta^{(t)}\|_2<\eta E$. After rearranging the terms,
\begin{align*}
\|\dLoss(\theta^{(t)})\|_2^2\leq\frac{2}{\eta}(\Loss(\theta^{(t)})-\Loss(\theta^{(t+1)}))+\eta^2E^2
\end{align*}
Summing up from $t=1$ to $T$ and taking the average,
\begin{align*}
\frac{1}{T}\sum_{t=1}^T\|\dLoss(\theta^{(t)})\|_2^2
  &\leq\frac{2}{\eta T}(\Loss(\theta^{(1)})-\Loss(\theta^{(T+1)}))+\eta^2E^2\\
  &\leq\frac{2}{\eta T}(\Loss(\theta^{(1)})-\Loss(\theta^*))+\eta^2E^2
\end{align*}
where $\theta^*$ is the minimum point of $\mathcal{L}(\cdot)$. By taking $\eta=\frac{\sqrt{\varepsilon}}{E}$ and $T=(\Loss(\theta^{(1)})-\Loss(\theta^{*}))E\varepsilon^{-\frac{3}{2}}$ with an arbitrarily small constant $\varepsilon>0$, we have
$$\frac{1}{T}\sum_{t=1}^T\|\dLoss(\theta^{(t)})\|_2\leq3\varepsilon$$
\end{proof}

\newpage
\section{Training Time Breakdown of Full-Graph Training Methods}
\label{sec:breakdown2}
To understand why PipeGCN significantly boosts the training throughput over full-graph training methods, we provide the detailed time breakdown in Tab.~\ref{tab:baseline_breakdown} using the same model as Tab.~\ref{tab:setups} (4-layer GraphSAGE, 256 hidden units), in which ``GCN'' denotes the vanilla partition-parallel training illustrated in Fig.~\ref{fig:ConceptCompare}(a). 
We observe that PipeGCN greatly saves communication time.
\begin{table}[h]
\centering
\caption{Epoch time breakdown of full-graph training methods on the Reddit dataset.}
\label{tab:baseline_breakdown}
\begin{tabular}{c|cccc}
\specialrule{.1em}{.05em}{.05em}
\textbf{Method} & \textbf{Total time (s)} & \textbf{Compute (s)} & \textbf{Communication (s)} & \textbf{Reduce (s)} \\
\specialrule{.1em}{.05em}{.05em}
ROC (2 GPUs) & 3.63 & 0.5 & 3.13 & 0.00 \\
CAGNET (c=1, 2 GPUs) & 2.74 & 1.91 & 0.65 & 0.18 \\
CAGNET (c=2, 2 GPUs) & 5.41 & 4.36 & 0.09 & 0.96 \\
GCN (2 GPUs)    & 0.52 & 0.17 & 0.34 & 0.01 \\
PipeGCN (2 GPUs)     & 0.27 & 0.25 & 0.00 & 0.02 \\
ROC (4 GPUs)         & 3.34 & 0.42 & 2.92 & 0.00 \\
CAGNET (c=1, 4 GPUs) & 2.31 & 0.97 & 1.23 & 0.11 \\
CAGNET (c=2, 4 GPUs) & 2.26 & 1.03 & 0.55 & 0.68 \\
GCN (4 GPUs)	 & 0.48 & 0.07 & 0.40 & 0.01 \\
PipeGCN (4 GPUs)	 & 0.23 & 0.10 & 0.10 & 0.03 \\
\specialrule{.1em}{.05em}{.05em}
\end{tabular}
\end{table}

\newpage
\section{Training Time Improvement Breakdown of PipeGCN} % (Additional Experiments)
\label{sec:breakdown}
To understand the training time improvement offered by PipeGCN, we further breakdown the epoch time into three parts (intra-partition computation, inter-partition communication, and reduce for aggregating model gradient) and provide the result in Fig.~\ref{fig:breakdown}.
We can observe that: 
1) inter-partition \textit{\textbf{communication dominates the training time}} in vanilla partition-parallel training (GCN);
2) \textit{\textbf{PipeGCN (with or without smoothing) greatly hides the communication overhead}} across different number of partitions and all datasets, e.g., the communication time is hidden completely in 2-partition Reddit and almost completely in 3-partition Yelp, thus the substantial reduction in training time; 
and 3) the proposed \textit{\textbf{smoothing incurs only minimal overhead}} (i.e., minor difference between PipeGCN and PipeGCN-GF).
Lastly, we also notice that \textit{when communication ratio is extremely large (85\%+), PipeGCN hides communication significantly but not completely} (e.g., 10-partition ogbn-products), in which case we can employ those compression and quantization techniques~(\cite{alistarh2017qsgd,seide20141bit,wen2017terngrad,li2018inceptionn,li2018gradiveq}) from the area of general distributed SGD for further reducing the communication, as the compression is orthogonal to the pipeline method.
Besides compression, we can also increase the pipeline depth of PipeGCN, e.g., using two iterations of compute to hide one iteration of communication, which is left to our future work.
\begin{figure*}[h]
\begin{center}
\includegraphics[width=1.0\linewidth]{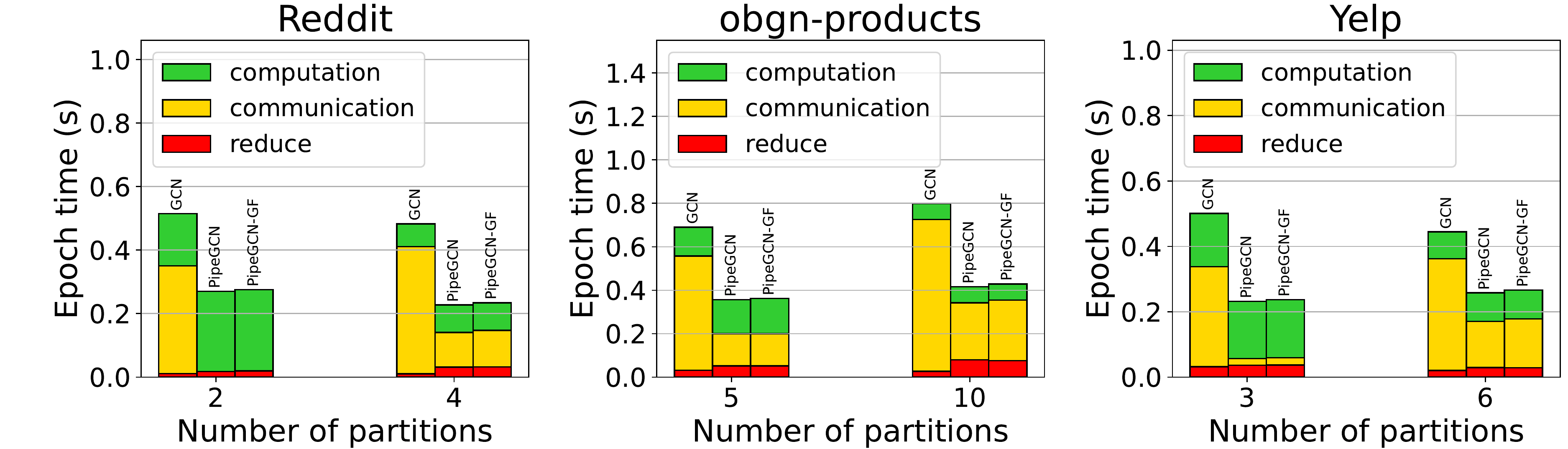}
\caption{Training time breakdown of vanilla partition-parallel training (GCN), PipeGCN, and PipeGCN with smoothing (PipeGCN-GF).}
\label{fig:breakdown}
\end{center}
\end{figure*}

\newpage
\section{Maintaining Convergence Speed (Additional Experiments)}
\label{sec:add_convergence}

We provide the additional convergence curves on Yelp in Fig.~\ref{fig:curve2}.
We can see that \textit{\textbf{PipeGCN and its variants maintain the convergence speed w.r.t the number of epochs while substantially reducing the end-to-end training time.}}

\begin{figure}[H]
\begin{center}
\includegraphics[width=0.7\linewidth]{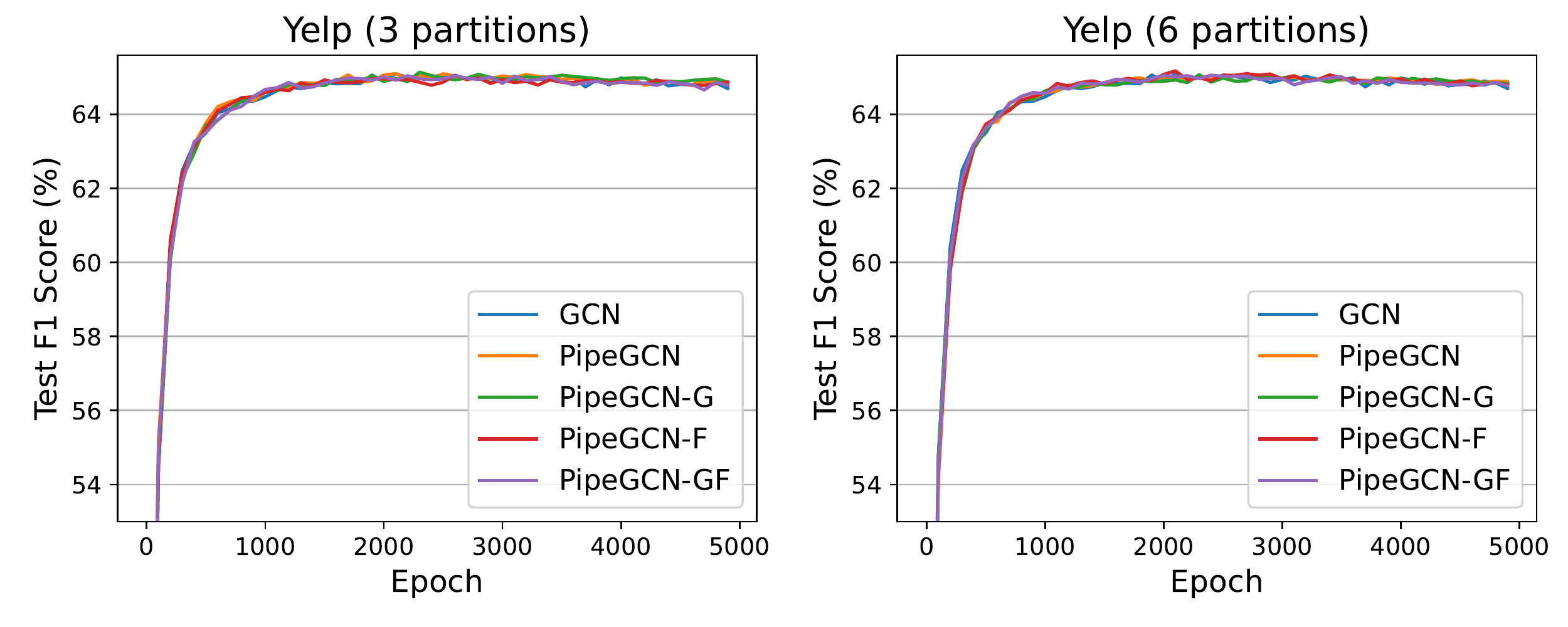}
\caption{The epoch-to-accuracy comparison on ``Yelp'' among the vanilla partition-parallel training (GCN) and PipeGCN variants (PipeGCN*), where \textit{PipeGCN and its variants achieve a similar convergence as the vanilla training (without staleness) but are twice as fast in terms of wall-clock time} (see the Throughput improvement in Tab.~\ref{tab:performance} of the main content).}
\label{fig:curve2}
\end{center}
\end{figure}

\newpage
\section{Scaling GCN Training over Multiple GPU Servers}
\label{sec:multi-node}
We also scale up PipeGCN training over multiple GPU servers (each contains AMD Radeon Instinct MI60 GPUs, an AMD EPYC 7642 CPU, and 48 lane PCI 3.0 connecting CPU-GPU and GPU-GPU) networked with 10Gbps Ethernet.

The accuracy results of PipeGCN and its variants are summarized in Tab.~\ref{tab:multi_node_acc}:
\begin{table}[h]
\centering
\caption{The accuracy of PipeGCN and its variants on Reddit.}
\label{tab:multi_node_acc}
\begin{tabular}{c|cccc}
\specialrule{.1em}{.05em}{.05em}
\textbf{\#partitions (\#node$\times$\#gpus)} & \textbf{PipeGCN} & \textbf{PipeGCN-F} & \textbf{PipeGCN-G} & \textbf{PipeGCN-GF} \\
\specialrule{.1em}{.05em}{.05em}
2 (1$\times$2)                   & 97.12\%  & 97.09\%    & 97.14\%    & 97.12\%     \\

3 (1$\times$3)                   & 97.01\%  & 97.15\%    & 97.17\%    & 97.14\%     \\

4 (1$\times$4)                   & 97.04\%  & 97.10\%    & 97.09\%    & 97.10\%     \\

6 (2$\times$3)                   & 97.09\%  & 97.12\%    & 97.08\%    & 97.10\%     \\

8 (2$\times$4)                   & 97.02\%  & 97.06\%    & 97.15\%    & 97.03\%     \\

9 (3$\times$3)                   & 97.03\%  & 97.08\%    & 97.11\%    & 97.08\%     \\

12 (3$\times$4)                  & 97.05\%  & 97.05\%    & 97.12\%    & 97.10\%     \\

16 (4$\times$4)                  & 96.99\%  & 97.02\%    & 97.14\%    & 97.12\% \\
\specialrule{.1em}{.05em}{.05em}
\end{tabular}
\end{table}

Furthermore, we provide PipeGCN's speedup against vanilla partition-parallel training in Tab.~\ref{tab:multi_node_speedup}:
\begin{table}[h]
\centering
\caption{The speedup of PipeGCN and its vatiants against vanilla partition-parallel training on Reddit.}
\label{tab:multi_node_speedup}
\begin{tabular}{c|ccccc}
\specialrule{.1em}{.05em}{.05em}
\textbf{\#nodes$\times$\#gpus} & \textbf{GCN}   & \textbf{PipeGCN} & \textbf{PipeGCN-G} & \textbf{PipeGCN-F} & \textbf{PipeGCN-GF} \\
\specialrule{.1em}{.05em}{.05em}
1$\times$2          & 1.00$\times$ & 1.16$\times$   & 1.16$\times$     & 1.16$\times$     & 1.16$\times$      \\
1$\times$3          & 1.00$\times$ & 1.22$\times$   & 1.22$\times$     & 1.22$\times$     & 1.22$\times$      \\
1$\times$4          & 1.00$\times$ & 1.29$\times$   & 1.28$\times$     & 1.29$\times$     & 1.28$\times$      \\
2$\times$2          & 1.00$\times$ & 1.61$\times$   & 1.60$\times$     & 1.61$\times$     & 1.60$\times$      \\
2$\times$3          & 1.00$\times$ & 1.64$\times$   & 1.64$\times$     & 1.64$\times$     & 1.64$\times$      \\
2$\times$4          & 1.00$\times$ & 1.41$\times$   & 1.42$\times$     & 1.41$\times$     & 1.37$\times$      \\
3$\times$2          & 1.00$\times$ & 1.65$\times$   & 1.65$\times$     & 1.65$\times$     & 1.65$\times$      \\
3$\times$3          & 1.00$\times$ & 1.48$\times$   & 1.49$\times$     & 1.50$\times$     & 1.48$\times$      \\
3$\times$4          & 1.00$\times$ & 1.35$\times$   & 1.36$\times$     & 1.35$\times$     & 1.34$\times$      \\
4$\times$2          & 1.00$\times$ & 1.64$\times$   & 1.63$\times$     & 1.63$\times$     & 1.62$\times$      \\
4$\times$3          & 1.00$\times$ & 1.38$\times$   & 1.38$\times$     & 1.38$\times$     & 1.38$\times$      \\
4$\times$4          & 1.00$\times$ & 1.30$\times$   & 1.29$\times$     & 1.29$\times$     & 1.29$\times$      \\
\specialrule{.1em}{.05em}{.05em}
\end{tabular}
\end{table}

From the two tables above, we can observe that our PipeGCN family consistently \textbf{maintains the accuracy} of the full-graph training, while \textbf{improving the throughput by 15\%$\sim$66\%} regardless of the machine settings and number of partitions.

\newpage
\section{Implementation Details}
We discuss the details of the effective and efficient implementation of PipeGCN in this section.

First, for parallel communication and computation, a second cudaStream is required for communication besides the default cudaStream for computation.
To also save memory buffers for communication, we batch all communication (e.g., from different layers) into this second cudaStream.
When the popular communication backend, Gloo, is used, we parallelize the CPU-GPU transfer with CPU-CPU transfer.

Second, when Dropout layer is used in GCN model, it should be applied after communication.
The implementation of the dropout layer for PipeGCN should be considered carefully so that the dropout mask remains consistent for the input tensor and corresponding gradient.
If the input feature passes through the dropout layer before being communicated, during the backward phase, the dropout mask is changed and the gradient of masked values is involved in the computation, which introduces noise to the calculation of followup gradients.
As a result, the dropout layer can only be applied after receiving boundary features.

\end{document}